\providecommand{\SetAlgoLined}{\SetLine}
\begin{document}

\newtheorem{theorem}{Theorem}
\renewcommand{\algorithmicrequire}{\textbf{Input:}}
\renewcommand{\algorithmicensure}{\textbf{Output:}}
\newcommand{\eg}{\emph{e.g.}\xspace}
\newcommand{\st}{\emph{s.t.}\xspace}
\newcommand{\ie}{\emph{i.e.}\xspace}
\newcommand{\etc}{\emph{etc.}\xspace}
\newcommand{\wrt}{\emph{w.r.t.}\xspace}
\newcommand{\etal}{\emph{et al.}\xspace}
\newcounter{save}\setcounter{save}{\value{section}}
{\def\addtocontents#1#2{}%
\def\addcontentsline#1#2#3{}%
\def\markboth#1#2{}%
%
\title{Semi-supervised Feature Analysis by Mining Correlations among Multiple Tasks}

\author{Xiaojun~Chang
        and~Yi~Yang 
\thanks{Xiaojun Chang and Yi Yang are with School of Information Technology and Electric Engineering, The University of Queensland, Australia.(e-mail: cxj273@gmail.com, yi.yang@uq.edu.au).}
}

\markboth{Journal of \LaTeX\ Class Files,~Vol.~11, No.~4, December~2012}%
{Shell \MakeLowercase{\textit{et al.}}: Bare Demo of IEEEtran.cls for Journals}

\maketitle

\begin{abstract}
In this paper, we propose a novel semi-supervised feature selection framework by mining correlations among multiple tasks and apply it to different multimedia applications. Instead of independently computing the importance of features for each task, our algorithm leverages shared knowledge from multiple related tasks, thus, improving the performance of feature selection. Note that we build our algorithm on assumption that different tasks share common structures. The proposed algorithm selects features in a batch mode, by which the correlations between different features are taken into consideration. Besides, considering the fact that labeling a large amount of training data in real world is both time-consuming and tedious, we adopt manifold learning which exploits both labeled and unlabeled training data for feature space analysis. Since the objective function is non-smooth and difficult to solve, we propose an iterative algorithm with fast convergence. Extensive experiments on different applications demonstrate that our algorithm outperforms other state-of-the-art feature selection algorithms.
\end{abstract}

\begin{IEEEkeywords}
Multi-task feature selection, semi-supervised learning, image annotation, 3D motion data annotation
\end{IEEEkeywords}

\IEEEpeerreviewmaketitle

\section{Introduction}

\IEEEPARstart{I}{n} many computer vision and pattern recognition applications, dimension of data representation is normally very high. Recent studies have claimed that not all features in the high-dimensional feature space are discriminative and informative, since many features are often noisy or correlated to each other, which will deteriorate the performances of subsequent data analysing tasks \cite{bib_yu,bib_luis,bib_Shuang}. Consequently, feature selection is utilized to select a subset of features from the original high dimensional feature space \cite{bib_webimage,bib_ma,bib_Yang_Yi,bib_Huan,DBLP:journals/tsmc/YangO12}. It has twofold functions in enhancing performances of learning tasks. First, feature selection eliminates noisy and redundant information to get a better representation, thus facilitating classification and clustering tasks. Second, dimension of selected feature space becomes much lower, which makes the subsequent computation more efficient. Inspired by the motivations, much progress has been made to feature selection during last few years.

According to availability of class labels of training data, feature selection algorithms fall into two groups, \ie supervised feature selection and unsupervised feature selection. Supervised feature selection algorithms, for example, Fisher Score \cite{bib_Duda}, only use labeled training data for feature selection. With sufficient labeled training samples, supervised feature selection is reliable to train appropriate feature selection functions because of utilization of class labels. However, labeling a large amount of training samples manually is unrealistic in real-world applications. Recent works on semi-supervised learning have indicated that it is beneficial to leverage both labeled and unlabeled training data for data analysis. Motivated by the progress of semi-supervised learning, much research attention has been paid to semi-supervised feature selection. For example, Zhao \etal propose a semi-supervised feature selection algorithm based on spectral analysis. A common limitation of the existing supervised and semi-supervised feature selection algorithms is that they evaluate the importance of each feature individually, ignoring correlations between different features. To address this problem, some state-of-the-art algorithms are proposed to take feature correlations into consideration for feature selection. For example, \cite{bib_Nie} and \cite{bib_Shuang} implement their methods in a supervised way and Ma \etal design their approach in a semi-supervise way in \cite{bib_ma}. 

Another limitation of current feature selection algorithms is that they select features for each task individually, which fails to mine correlations among multiple related tasks. Recent researches have indicated that it is beneficial to learn multiple related tasks jointly \cite{bib_Rich,bib_convexMultitask,multifs,multi_task_sen}. Motivated by this fact, multi-task learning has been introduced to the field of multimedia. For instance, Yang \etal present a novel feature selection algorithm which leverages shared information from related tasks in \cite{bib_Yang_Yi}. Nevertheless, they design their algorithm in a supervised way.

The semi-supervised algorithm proposed in this paper combines the strengths of semi-supervised feature selection and multi-task learning. Both labeled and unlabeled training data are utilized for feature selection. Meanwhile, correlations between different features are taken into consideration to improve the performance of feature selection. 

\begin{figure*}[!ht]
\label{framework}
\centering
\includegraphics[scale=0.6]{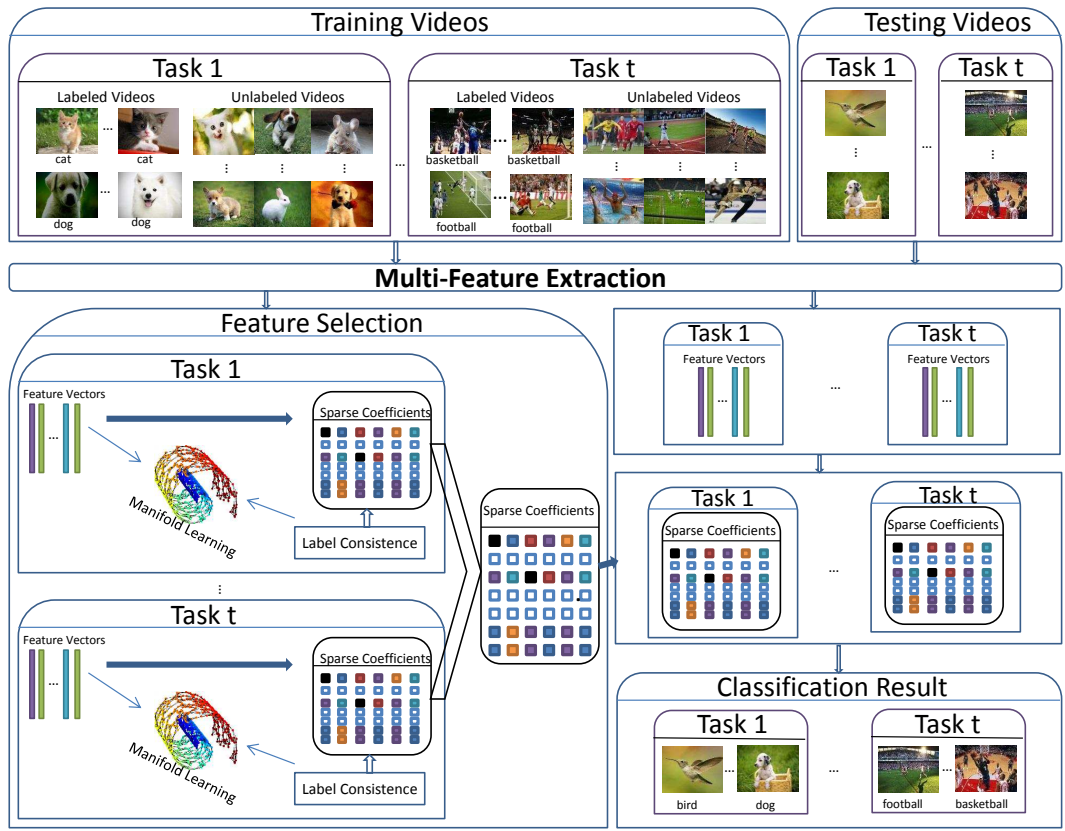}
\caption{The Illustration of general process of applying the proposed approach for video classification.}
\end{figure*}

We illustrate how the proposed algorithm works for video classification in Figure \ref{framework}. First, we represent all the training and testing videos as feature vectors. Then, sparse coefficients are learnt by exploiting relationships among different features and levearging knowledge from multiple related tasks. After selecting the most representative features, we can apply the sparse coefficients to the feature vectors of the testing videos for classification. 

We name our proposed algorithm Semi-supervised Feature selection by Mining Correlations among multiple tasks (SFMC). The main contributions of our work can be summarized as follows:

\begin{enumerate}
 \item We combine semi-supervised feature selection and multi-task learning into a single framework, which can select the most representative features with an insufficient amount of labeled training data per task.
 
 \item To explore correlations among multimedia data, we leverage the benefit of manifold learning into our framework. 
 
 \item Since the objective function is non-smooth and difficult to solve, a fast iterative algorithm to obtain the optimal solution is proposed. Experimental results on convergence demonstrate that the proposed algorithm converges within very few iterations.

\end{enumerate}

The rest of this paper is organized as follows: Section 2 summarizes the overview of the related work. A novel Semi-supervised Feature Selection by Mining Correlations among multiple tasks is proposed in section 3. We present our experimental results in section 4. The conclusion of our work is discussed in section 5.

\section{Related work}
In this section, we briefly review the related research on feature selection, semi-supervised learning and multi-task learning.

\subsection{Feature selection}
Previous works have claimed that feature selection is capable of selecting the most representative features, thus facilitating subsequent data analysing tasks \cite{bib_ZhengZhao} \cite{bib_featureSelection} \cite{DBLP:journals/tnn/XiangNMPZ12}.

Existing feature selection algorithms are designed in various ways. Classical feature selection algorithms, such as Fisher Score \cite{bib_Duda}, evaluate the weights of all features, rank them accordingly and select the most discriminating features one by one \cite{bib_l21norm}. Although these classical feature selection algorithms gain good performances in different applications, they have three main limitations. First, they only use labeled training data to exploit the correlations between features and labels for feature selection. Labeling a large amount of training data consumes a lot of human labor in real-world applications. Second, the most representative features are selected one by one, thus ignoring the correlations among different features. Third, they select features for each task independently, which fails to leverage the knowledge shared by multiple related tasks.

To overcome the aforementioned limitations, researchers have proposed multiple feature selection algorithms. $l_{2,1}$-norm regularization has been widely used in feature selection algorithms for its capability of selecting features across all data points with joint sparsity. For example, Zhao \etal propose an algorithm which selects features jointly based on spectral regression with $l_{2,1}$-norm constraint in \cite{bib_spec}. Nie \etal adopt $l_{2,1}$-norm on both regularization term and loss function in \cite{bib_Nie}. Yang \etal propose to select features by leveraging shared knowledge from multiple related tasks in \cite{bib_Yang_Yi}. However, their algorithms are all designed in a supervised way.

\subsection{Semi-supervised learning}
Semi-supervised learning has shown its promising performance in different applications \cite{bib_Xiaojin,DBLP:journals/tnn/SoaresCY12,DBLP:journals/tsmc/Wang11a,bib_Ira,cvprsemi,DBLP:conf/aaai/ChangNYH14}. With semi-supervised learning, unlabeled training data can be exploited to learn data structure, which can save human labor cost for labeling a large amount of training data \cite{bib_yiyang_rank,semiYun,DBLP:conf/pakdd/ChangSWLL14,ICMLsemi}. Hence, semi-supervised learning is beneficial in terms of both the human laboring cost and data analysis performance.

Graph Laplacian based semi-supervised learning has gained increasing interest for its simplicity and efficiency \cite{active_diversity}. Nie \etal propose a manifold learning framework based on graph Laplacian and compared its performance with other state-of-the-art semi-supervised algorithms in \cite{bib_manifold}. Ma \etal propose a semi-supervised feature selection algorithm built upon manifold learning in \cite{bib_ma}. In \cite{bib_Yang}, Yang \etal propose a new semi-supervised algorithm based on a robust Laplacian matrix for relevance feedback. Their algorithm has demonstrated its prominent performance. Therefore, we propose to leverage it in our feature selection framework. These previous works, however, independently select features for each task, which fails to consider correlations among multiple related tasks.

\subsection{Multi-task learning}
Multi-task learning has been widely used in many applications with the appealing advantage that it learns multiple related tasks with a shared representation \cite{bib_Rich} \cite{bib_convexMultitask} \cite{SIGKDD}. Recent researches have indicated that learning multiple related tasks jointly always outperforms learning them independently. Inspired by the progress of multi-task learning, researchers have introduced it to the field of multimedia and demonstrated its promising performance on multimedia analysis. For example, Yang \etal propose a novel multi-task feature selection algorithm which improves feature selection performance by leveraging shared information among multiple related tasks \cite{bib_Yang_Yi}. In \cite{bib_Yang_Yi}, Ma \etal apply knowledge adaptation to multimedia event detection and compare its performance with several state-of-the-art algorithms. Despite of their good performances, these classical algorithms are all implemented only with labeled training data.

\section{Methodology}
In this section, we describe the approach of our proposed algorithm in detail.

\subsection{Problem Formulation}

Suppose we are going to select features for $t$ tasks. The $l$-th task contains $n_l$ training data with $m_l$ data labeled. We can formulate the regularized framework for feature selection as follows:
\begin{equation}\label{obj1}
\min_{W_l} \sum_{l=1}^t ( loss(W_l) + \alpha g(W_l)) + \gamma \Omega(W),
\end{equation}
where $W_l$ is feature selection matrix for the $l$-th task, $W = [W_1, \cdots , W_t]$, $loss(W_l)$ is the loss function which evaluates consistency between features and labels, $g(W_l)$ is a regularization function, $\Omega(W)$ is a regularization term which is used to encode the common components of different feature selection functions, $\alpha$ and $\gamma$ are regularization parameters.

To step further, we first give the definitions of Frobenius norm and trace norm. Given an arbitrary matrix $M \in \mathbb{R}^{a \times b}$ where $a$ and $b$ are arbitrary numbers, its Frobenius norm is defined as $\|M\|_F$. The definition of its $l_{2,1}$-norm is:

\begin{equation}
\|M\|_{2,1} = \sum_{i=1}^a \sqrt{\sum_{j=1}^b M_{ij}^2} ,
\end{equation}
and the definition of its trace norm is:

\begin{equation}
\|M\|_* = Tr(MM^T)^{\frac{1}{2}},
\end{equation}
where $Tr(\cdot)$ denotes the trace operator. In the literature, there are many approaches to define the loss function. Following the works in \cite{bib_ma} \cite{bib_Yang_Yi}, we adopt the least square loss function for its simplicity and efficiency. Recent works \cite{bib_Nie} \cite{bib_l21norm} claim that minimizing the regularization term $\|W_l\|_{2,1}$ makes $W_l$ sparse, which demonstrates that $W_l$ is especially suitable for feature selection. Motivated by the works in \cite{bib_Guillaume} \cite{bib_Yang_Yi}, we propose to leverage shared knowledge among multiple related tasks by minimizing the trace norm of $W$. The objective function is given by:

\begin{equation}
\min_{W_l} \sum_{l=1}^t (loss(W_l) + \alpha \|W_l\|_{2,1}) + \gamma \|W\|_{*}
\end{equation}

State-of-the-art feature selection algorithms are implemented through supervised learning and select features for each task independently.  In our work, we want to incorporate multi-task learning and semi-supervised learning into \eqref{obj1}. We propose to leverage semi-supervised learning by adopting the Laplacian proposed in \cite{bib_Yang}. We adopt this Laplacian because it exploits both manifold structure and local discriminant information of multimedia data, thus resulting in better performance.

To begin with, let us define $X_l = [x_l^1, \cdots , x_l^{n_l}]$ as the training data matrix of the $l$-th task where $m_l$ data are labeled and $n_l$ is the total number of the training data of the $l$-th task. $x_l^i \in \mathbb{R}^d$ is the $i$-th datum of the $l$-th task. $Y_l = [y_l^1, \cdots , y_l^{m_l} , y_l^{m_l+1} , \cdots , y_l^{n_l}]^T \in \{0, 1\}^{n_l \times c_l}$ is the label matrix and $c_l$ denotes class number of the $l$-th task. $y_l^i|_{i=1}^{n_l} \in \mathbb{R}^{c_l}$ is the label vector with $c_l$ classes. ${Y_l}_{i,j} = 1$ if $x_l^i$ is in the $j$-th class of the $l$-th task while ${Y_l}_{i,j} = 0$ otherwise. For unlabeled datum $x_l^i$, $y_l^i$ is set to a zero vector. For any $d$, we define $\mathbf{1}_d \in \mathbb{R}^d$ as a column vector with all the elements equal to 1, $H_d = I - \frac{1}{d} \mathbf{1}_d \mathbf{1}_d^T \in \mathbb{R}^{d \times d}$ as a matrix for centering the data by subtracting the mean of the data. Note that $H_d = H_d^T = H_dH_d$. For each data point $x_l^i$ of the $l$-th task, we construct a local clique $\mathcal{N}_{lk}$ containing $x_l^i$ and its $k-1$ nearest neighbors. Euclidean distance is used to determine whether two given data points are within $k$ nearest neighbors in the original feature space. $G_l^i = \{ i_l^0, i_l^1, \cdots , i_l^{k-1} \}$ is index set of samples in $\mathcal{N}_{lk}$. $S_{li}$ denotes selection matrix with its elements $(S_{li})_{pq} = 1$ if $p = G_l^i\{q\}$ and $(S_{li})_{pq} = 0$ otherwise.

Inspired by \cite{bib_Yang}, we construct the Laplacian matrix by exploiting both manifold structure and local discriminant information. Denoting $L_{li} = H_k(X_l^TX_l + \lambda I)^{-1}H_k $, we compute the Laplacian matrix L as follows:
\begin{equation}
\begin{aligned}
L_l &= \sum_{i=1}^{n_l} S_{li}L_{li}S_{li}^T \\ &= [S_{l1}, \cdots , S_{ln_l}]\begin{bmatrix}
L_{l1} &  & \\
  &  \cdots & \\
   &    &   L_{ln_l} \\
\end{bmatrix}
[S_{l1}, \cdots , S_{ln_l}]^T.
\end{aligned}
\end{equation}

Note that Manifold Regularization is able to explore the manifold structure possessed by multimedia data \cite{bib_manifold} \cite{bib_YiYangManifold} \cite{seman}. By applying Manifold Regularization to the loss function in \eqref{obj1}, we have

\begin{equation}\label{obj2}
\begin{aligned}
\arg \min_{W,b} \sum_{l=1}^t Tr(W^TX_lL_lX_l^TW) + \alpha (\|W_l\|_{2,1} \\ + \beta \|X_{lL}^TW_l + \mathbf{1}_{n_l}b_l^T - Y_{lL}\|_F^2)) + \gamma \|W\|_{*},
\end{aligned}
\end{equation}
where $Tr(\cdot)$ denotes trace operator, $X_{lL}$ and $Y_{lL}$ are labeled training data and corresponding ground truth labels of the $l$-th task.

To make all labels of training data contribute to the optimization of $W_l$, we introduce a predicted label matrix $F_l = [f_{l1}, \cdots , f_{l_{n_l}} ] \in \mathbb{R}^{n_l \times c_l}$ for the training data of the $l$-th task. $f_{li} \in \mathbb{R}^{c_l}$ is the predicted label vector of $x_{li}$. According to \cite{bib_Xiaojin} \cite{bib_ma}, $F_l$ can be obtained as follows:

\begin{equation}\label{semif}
\arg \min_{F_l} Tr(F_l^TL_lF_l) + Tr((F_l-Y_l)^TU_l(F_l-Y_l)),
\end{equation}
where $U_l$ is the selection diagonal matrix of the $l$-th task. The diagonal element ${U_l}_{ii} = \infty$ if $x_{li}$ is labeled and ${U_l}_{ii} = 1$ otherwise. In the experiments, $10^6$ is used to approximate $\infty$.

Following the work in \cite{bib_ma}, we incorporate \eqref{semif} into \eqref{obj2}. At the same time, all the training data and corresponding labels are taken into consideration. Therefore, the objective function finally arrives at:

\begin{equation}\label{finalobj}
\begin{aligned}
\min_{F_l, W_l, b_l} \sum_{l=1}^t (Tr[(F_l - Y_l)^TU_l(F_l-Y_l)] + Tr(F_l^TL_lF_l) \\
+ \alpha(\|W_l\|_{2,1} + \beta \|X_l^TW_l + \mathbf{1}_{n_l}b_l^T - F_l\|_F^2)) + \gamma \|W\|_*
\end{aligned}
\end{equation}

From \eqref{finalobj} we can see that the proposed algorithm is capable of evaluating the informativeness of all features jointly for each task with the $l_{2,1}$-norm and the information from different tasks can be transferred from one to another with the trace norm.

\begin{algorithm}
\caption{Optimization Algorithm for SFMC }
 \SetAlgoLined
 \KwData{Training data $X_l|_{l=1}^t \in \mathbb{R}^{d \times n_l} $ \\
 ~~~~~~~~Training data labels $Y_l|_{l=1}^t \in \mathbb{R}^{n \times c}$ \\
 ~~~~~~~~~Parameters $\gamma$, $\alpha$ and $\beta$}
 \KwResult{\\
 ~~~~~~~~~Feature Selection Matrix $W_l|_{l=1}^t \in \mathbb{R}^{d \times c_l}$ }
 $l = 1$ \;
 \While{$l~ \leq t$} {
 Initialise $W_l|_{l=1}^t \in \mathbb{R}^{d \times c_l}$ \;
 Compute the Laplacian matrix $L_l|_{l=1}^t$ \;
 Compute the Selection matrix $U_l|_{l=1}^t$ \;
 $H_{n_l}=I_{n_l} - \frac{\mathbf{1}}{n_l} \mathbf{1}_{n_l} \mathbf{1}_{n_l}^T$ \;
 $P_l = (\alpha \beta H_{n_l} + U_l + L_l)^{-1}$ \;
 $R_l=X_lH_{n_l}(I_{n_l} - \alpha \beta P_l)H_{n_l}X_l^T $ \;
 $T_l = X_lH_{n_l}P_lU_lY_l$ \;
 }
 Set~$r=0$ \;
 Set $W_0 = [W_1, \cdots , W_t]$ \;
 \Repeat{Convergence}{
 l = 1 \;
 Compute the diagonal matrix as: $\widetilde{D}^r = (1/2)(W_rW_r^T)^{-1/2} $ \;
 \While{$l \leq t$}{
 Compute the diagonal matrix $D_l^r$ according to Eq. \eqref{Dlvalue} \;
 Update $W_l^r$ by $W_l^r = (R_l + \frac{\alpha}{\beta} D_l^r + \frac{\gamma}{\alpha \beta} \widetilde{D}^r)^{-1}T_l$ \;
 Update $F_l^r$ by $F_l^r = (\alpha \beta H_{n_l} + U_l + L_l)^{-1} (\alpha \beta H_{n_l} X_l^TW_l + U_lY_l)$ \;
 Update $b_l^r$ by $b_l^r = \frac{\mathbf{1}}{n_l} (F_l - X_l^TW_l)^T \mathbf{1}_{n_l}$ \;
 $l = l + 1$ \;
 }
 $W_{r+1} = [W_1, \cdots , W_t]$ \;
 $r = r + 1$ \;
 }
Return the optimal $W_l|_{l=1}^t$ and $b_l|_{l=1}^t$. 
\end{algorithm}

\subsection{Optimization}
The proposed function involves the $l_{2,1}$-norm and trace norm, which are difficult to solve in a closed form. We propose to solve this problem in the following steps.

By setting the derivative of (\ref{finalobj}) $w.r.t$ $b_l$ to $0$, we get
\begin{equation}
\label{bl}
b_l = \frac{\mathbf{1}}{n_l} (F_l - X_l^TW_l)^T \mathbf{1}_{n_l}
\end{equation}

Substituting $b_l$ in (\ref{finalobj}) with (\ref{bl}), we obtain

\begin{equation}\nonumber
\small
\begin{aligned}
 & \min_{F_l, W_l, b_l} \sum_{l=1}^t (Tr[(F_l - Y_l)^TU_l(F_l - Y_l)] + Tr(F_l^TL_lF_l)
 + \\ & \alpha ( \|W_l\|_{2,1}  + \beta \|X_l^TW_l + \frac{1}{n_l} \mathbf{1}_{n_l} \mathbf{1}_{n_l}^T(F_l - X_l^TW_l)  - F_l \|_F^2)) \\ & + \gamma \|W\|_*
\end{aligned}
\end{equation}

\begin{equation}\label{opdevnob}
\small
\begin{aligned}
\Rightarrow & \min_{F_l, W_l} \sum_{l=1}^t (Tr[(F_l - Y_l)^TU_l(F_l - Y_l)] + Tr(F_l^TL_lF_l) \\
& + \alpha (\|W_l\|_{2,1} + \beta \|H_{n_l}X_l^TW_l - H_{n_l}F_l \|_F^2)) + \gamma \|W\|_*
\end{aligned}
\end{equation}
where $H_{n_l}=I_{n_l} - \frac{\mathbf{1}}{n_l} \mathbf{1}_{n_l} \mathbf{1}_{n_l}^T$ is a centering matrix. By setting the derivative of \eqref{opdevnob} $w.r.t$ $F_l$ to $0$, we have
\begin{equation} \nonumber
2U_lF_l - 2U_lY_l + 2L_lF_l + \alpha \beta ( 2H_{n_l}F_l - 2H_{n_l}X_l^TW_l) = 0
\end{equation}

Therefore, we have
\begin{equation}
F_l = (\alpha \beta H_{n_l} + U_l + L_l)^{-1} (\alpha \beta H_{n_l} X_l^TW_l + U_lY_l)
\end{equation}

Denoting $P_l = (\alpha \beta H_{n_l} + U_l + L_l)^{-1}$ and $Q_l = \alpha \beta H_{n_l}X_l^TW_l + U_lY_l$, we have
\begin{equation}\label{Fv}
F_l = P_lQ_l
\end{equation}

By substituting $F_l$ into \eqref{opdevnob} with \eqref{Fv}, we can rewrite the objective function as follows:

\begin{equation}
\begin{aligned}
& \min_{Q_l, W_l} \sum_{l=1}^t (Tr[(P_lQ_l - Y_l)^TU_l(P_lQ_l - Y_l)] \\ 
& + Tr(Q_l^TP_l^TL_lP_lQ_l) + \alpha ( \|W_l\|_{2,1}  \\
&+ \beta \|H_{n_l}X_l^TW_l - H_{n_l}P_lQ_l\|_F^2)) + \gamma \|W\|_*
\end{aligned}
\end{equation}

As $Tr(Q_l^TP_l^TU_lY_l)=Tr(Y_l^TU_l^TP_lQ_l)$ and $Tr(\alpha \beta W_l^TX_lH_lP_lQ_l)=Tr(\alpha \beta Q_l^TP_l^TH_lX_l^TW_l)$, the objective function can be rewritten as follows:
\begin{equation}
\begin{aligned}
\min_{W_l} \sum_{l=1}^t (\alpha \beta Tr(W_l^TX_lH_{n_l}(I_{n_l} - \alpha \beta P_l)H_{n_l}X_l^TW_l) \\ - 2\alpha \beta Tr(W_l^TX_lH_{n_l}P_lU_lY_l) + \alpha \|W_l\|_{2,1}) + \gamma \|W\|_*
\end{aligned}
\end{equation}

Denoting $R_l=X_lH_{n_l}(I_{n_l} - \alpha \beta P_l)H_{n_l}X_l^T $, $T_l = X_lH_{n_l}P_lU_lY_l$ and $W_l = [w_l^1, \cdots , w_l^d]$, the objection function becomes:

\begin{equation}
\begin{aligned}
&\min_{W_l} \sum_{l=1}^t (\alpha \beta Tr(W_l^TR_lW_l) - 2\alpha \beta Tr(W_l^TT_l) \\ &+ \alpha Tr(W_l^TD_lW_l)) + \gamma \|W^T\widetilde{D}W\|_*,
\end{aligned}
\end{equation}
where $\widetilde{D}=(1/2)(WW^T)^{-1/2}$ and $D_l$ is a diagonal matrix which is defined as:

\begin{equation}
D_l = \begin{bmatrix}
\frac{1}{2\|w_l^1\|_2} & & \\
    &  \ddots  & \\
    &     & \frac{1}{2\|w_l^d\|_2}
\end{bmatrix}.
\label{Dlvalue}
\end{equation}

By setting the derivative $w.r.t$ $W_l$ to $0$, we have
\begin{equation}
\label{eqwl}
W_l = (R_l + \frac{\alpha}{\beta}D_l + \frac{\gamma}{\alpha \beta} \widetilde{D})^{-1}T_l
\end{equation}

As shown in Algorithm 1, an iterative algorithm is proposed to optimize the objective function \eqref{finalobj} based on the above mathematical deduction.

\subsection{Convergence Analysis}

In this section, we prove that Algorithm 1 converges by the following theorem.

\begin{theorem}
The objective function value shown in \eqref{finalobj} monotonically decreases in each iteration until convergence by applying Algorithm 1.
\end{theorem}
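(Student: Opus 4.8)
The plan is to read Algorithm 1 as a majorization--minimization / block--coordinate scheme and to follow the value of \eqref{finalobj} through one pass of the \textbf{Repeat} loop. First I would dispose of the ``easy'' blocks: for fixed $W$, the $b_l$- and $F_l$-updates of Algorithm 1 are exactly the closed-form minimizers \eqref{bl} and \eqref{Fv} derived in the preceding Optimization subsection (set $\partial/\partial b_l=0$, substitute, then $\partial/\partial F_l=0$; performing the two eliminations in that order is what makes the resulting pair a joint minimizer over $(F_l,b_l)$). Hence, immediately after those updates the objective \eqref{finalobj} equals $G(W)+C$, where
\[
G(W)=\sum_{l=1}^t\bigl(\alpha\beta\,Tr(W_l^TR_lW_l)-2\alpha\beta\,Tr(W_l^TT_l)+\alpha\|W_l\|_{2,1}\bigr)+\gamma\|W\|_*
\]
and $C$ collects the terms free of $W$ (the $Y_l$-only residuals dropped in the Optimization subsection). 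So it suffices to prove $G(W_{r+1})\le G(W_r)$ along the iterates $\{W_r\}$ produced by the loop; since every summand of \eqref{finalobj} is non-negative (using $U_l\succeq0$, $L_l\succeq0$, and $\|\cdot\|_{2,1},\|\cdot\|_F^2,\|\cdot\|_*\ge0$), the sequence of objective values is monotone and bounded below, hence convergent.

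The core of the argument is two inequalities that majorize the non-smooth terms by reweighted quadratics tight at the current iterate. For the $\ell_{2,1}$ part I would use the scalar fact $\sqrt{u}-\tfrac{u}{2\sqrt{v}}\le\tfrac12\sqrt{v}$ for all $u\ge0,\ v>0$ (immediate from $(\sqrt u-\sqrt v)^2\ge0$); applied row by row to the updated block $[W_{r+1}]_l$ against its previous value $[W_r]_l$, with the weights $D_l^r$ of \eqref{Dlvalue}, and summed over rows it yields $\|[W_{r+1}]_l\|_{2,1}-Tr([W_{r+1}]_l^TD_l^r[W_{r+1}]_l)\le\|[W_r]_l\|_{2,1}-Tr([W_r]_l^TD_l^r[W_r]_l)$. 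For the trace norm I would use concavity of $A\mapsto Tr(A^{1/2})$ on the positive semidefinite cone, which gives $Tr(A^{1/2})-\tfrac12Tr(B^{-1/2}A)\le\tfrac12Tr(B^{1/2})$; with $A=W_{r+1}W_{r+1}^T$, $B=W_rW_r^T$ and $\widetilde{D}^r=\tfrac12(W_rW_r^T)^{-1/2}$ this reads $\|W_{r+1}\|_*-Tr(W_{r+1}^T\widetilde{D}^rW_{r+1})\le\|W_r\|_*-Tr(W_r^T\widetilde{D}^rW_r)$. (A routine $\epsilon$-smoothing, $\|w_l^i\|_2\mapsto\sqrt{\|w_l^i\|_2^2+\epsilon}$ and $WW^T\mapsto WW^T+\epsilon I$, makes $D_l^r,\widetilde{D}^r$ well defined on rank-deficient iterates and recovers \eqref{finalobj} as $\epsilon\to0$.)

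Assembling these is mechanical. Let $\hat G_r(W)=\sum_l\bigl(\alpha\beta\,Tr(W_l^TR_lW_l)-2\alpha\beta\,Tr(W_l^TT_l)+\alpha\,Tr(W_l^TD_l^rW_l)\bigr)+\gamma\,Tr(W^T\widetilde{D}^rW)$ be the reweighted surrogate (the $\gamma$-term read, consistently with \eqref{eqwl}, as a trace). Since $WW^T=\sum_lW_lW_l^T$, the surrogate decouples over $l$ and \eqref{eqwl} is precisely its per-block minimizer, so $\hat G_r(W_{r+1})\le\hat G_r(W_r)$. The correction is $G(W)-\hat G_r(W)=\alpha\sum_l\bigl(\|W_l\|_{2,1}-Tr(W_l^TD_l^rW_l)\bigr)+\gamma\bigl(\|W\|_*-Tr(W^T\widetilde{D}^rW)\bigr)$; evaluating this identity at $W=W_{r+1}$, replacing $\hat G_r(W_{r+1})$ by the larger $\hat G_r(W_r)$ and bounding each correction piece by its value at $W_r$ via the two inequalities above, everything collapses to $G(W_{r+1})\le G(W_r)$. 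Chaining over $r$ finishes the proof.

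The step I expect to be the main obstacle is justifying that the closed form \eqref{eqwl} is a true \emph{minimizer} of $\hat G_r$ rather than merely a stationary point, which requires $R_l+\tfrac{\alpha}{\beta}D_l^r+\tfrac{\gamma}{\alpha\beta}\widetilde{D}^r\succeq0$. As $D_l^r,\widetilde{D}^r\succ0$, this reduces to controlling $R_l=X_lH_{n_l}(I-\alpha\beta P_l)H_{n_l}X_l^T$: one would show $\alpha\beta H_{n_l}+U_l+L_l\succeq\alpha\beta I$ from $U_l\succeq I$ and $L_l\succeq0$ (possibly under a mild condition on $\alpha\beta$), so that $I-\alpha\beta P_l\succeq0$ and hence $R_l\succeq0$; alternatively one argues that the strictly positive reweightings already render the coefficient matrix positive definite and thus $\hat G_r$ strictly convex per block. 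The remaining points --- the ordering of the $b_l/F_l$ eliminations and the degenerate denominators in $D_l^r,\widetilde{D}^r$ --- are bookkeeping handled as indicated above.
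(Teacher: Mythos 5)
Your proof takes essentially the same route as the paper's: both read the $W$-update as exact minimization of a reweighted quadratic surrogate (with $D_l^r$ and $\widetilde{D}^r$ frozen at the current iterate) and then undo the reweighting using precisely the two majorization inequalities you state --- the scalar bound $\sqrt{u}-u/(2\sqrt{v})\le\sqrt{v}/2$ for the $\ell_{2,1}$ term, and the concavity bound for $Tr\bigl((WW^T)^{1/2}\bigr)$, which is what the paper invokes as Lemma 1 of \cite{bib_Yang_Yi}. The differences are presentational rather than structural: you first eliminate $(F_l,b_l)$ in closed form and you make explicit the degenerate-denominator and positive-semidefiniteness caveats that the paper leaves to its citations, so your writeup is a more careful rendering of the same argument.
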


\begin{proof}
Suppose after the $r$-th iteration, we have obtained $F_l^r$, $b_l^r$ and $W_l^r$. According the definition of $D_l$ and $\widetilde{D}$, the convergence of Algorithm 1 corresponds to the following inequality:

\begin{equation}
\small
\begin{aligned}
& \sum_{l=1}^t Tr[(F_l^{r+1}-Y_l)^TU_l(F_l^{r+1}-Y_l)] + Tr((F_l^{r+1})^TL_lF_l^{r+1}) \\
& + \alpha ( \sum_{j=1}^d \frac{\|(w_l^{r+1})^j\|_2^2}{\|(w_l^{r})^j\|_2} + \beta \|X_l^TW_l^{r+1} + \mathbf{1}_{n_l}{b_l^{r+1}}^T - F_l^{r+1} \|_F^2) \\
& + Tr((W^{r+1})^T\frac{\gamma}{2}(W^r(W^r)^T)^{- \frac{1}{2}}W^{r+1}) \\
& \leq \sum_{l=1}^t tr[(F_l^{r}-Y_l)^TU_l(F_l^{r}-Y_l)] + Tr((F_l^{r})^TL_lF_l^{r}) \\
& + \alpha ( \sum_{j=1}^d \frac{\|(w_l^{r})^j\|_2^2}{\|(w_l^{r})^j\|_2} + \beta \|X_l^TW_l^{r} + \mathbf{1}_{n_l}{b_l^r}^T - F_l^{r} \|_F^2) \\
& + Tr((W^{r})^T\frac{\gamma}{2}(W^r(W^r)^T)^{- \frac{1}{2}}W^{r}) 
\end{aligned}
\end{equation}

Following the works in \cite{bib_Nie} \cite{bib_Yang_Yi} \cite{bib_ma}, we have:
\begin{equation}
\small
\label{eqleq}
\begin{aligned}
& \sum_{l=1}^t (Tr[(F_l^{r+1} - Y_l)^TU_l(F_l^{r+1} - Y_l)] + Tr((F_l^{r+1})^TL_lF_l^{r+1}) \\
& + \alpha (\sum_{j=1}^d \|(w_l^{r+1})^j \| + \beta \|X_l^TW_l^{r+1} + \mathbf{1}_{n_l}(b_l^{r+1})^T - F_l^{r+1} \|_F^2 )) \\
& + \frac{\gamma}{2} Tr(W^{r+1}(W^{r+1})^T(WW^T)^{- \frac{1}{2}}) \\
& \leq \sum_{l=1}^t (Tr[(F_l^{r} - Y_l)^TU_l(F_l^{r} - Y_l)] + Tr((F_l^{r})^TL_lF_l^{r}) \\
& + \alpha (\sum_{j=1}^d \|(w_l^{r})^j \| + \beta \|X_l^TW_l^{r} + \mathbf{1}_{n_l}(b_l^{r})^T - F_l^{r} \|_F^2 )) \\
& + \frac{\gamma}{2} Tr(W^{r}(W^{r})^T(W^{r}(W^r)^T)^{- \frac{1}{2}}).
\end{aligned}
\end{equation}

We can rewrite \eqref{eqleq} as follows:

\begin{equation}
\label{similar}
\small
\begin{aligned}
& \sum_{l=1}^t (Tr[(F_l^{r+1} - Y_l)^TU_l(F_l^{r+1} - Y_l)] + Tr((F_l^{r+1})^TL_lF_l^{r+1}) \\
& + \alpha (\sum_{j=1}^d \|(w_l^{r+1})^j \| + \beta \|X_l^TW_l^{r+1} + \mathbf{1}_{n_l}(b_l^{r+1})^T - F_l^{r+1} \|_F^2 )) \\
& + \frac{\gamma}{2} Tr((W^{r+1}(W^{r+1})^T)^{-\frac{1}{2}}) + \frac{\gamma}{2} Tr(W^{r+1}(W^{r+1})^T(WW^T)^{- \frac{1}{2}}) \\
& - \frac{\gamma}{2} Tr((W^{r+1}(W^{r+1})^T)^{-\frac{1}{2}}) \\
& \leq \sum_{l=1}^t (Tr[(F_l^{r} - Y_l)^TU_l(F_l^{r} - Y_l)] + Tr((F_l^{r})^TL_lF_l^{r}) \\
& + \alpha (\sum_{j=1}^d \|(w_l^{r})^j \| + \beta \|X_l^TW_l^{r} + \mathbf{1}_{n_l}(b_l^{r})^T - F_l^{r} \|_F^2 )) \\
& + \frac{\gamma}{2} Tr((W^{r}(W^{r})^T)^{-\frac{1}{2}}) + \frac{\gamma}{2} Tr(W^{r}(W^{r})^T(W^{r}(W^r)^T)^{- \frac{1}{2}}) \\
& - \frac{\gamma}{2} Tr((W^{r}(W^{r})^T)^{-\frac{1}{2}}).
\end{aligned}
\end{equation}

According to Lemma 1 in \cite{bib_Yang_Yi}, we have:

\begin{equation}
\small
\label{lemma}
\begin{aligned}
& \frac{\gamma}{2} Tr(W^{r+1}(W^{r+1})^T(W^{r}(W^{r})^T)^{- \frac{1}{2}}) - \gamma Tr((W^{r+1}(W^{r+1})^T \\
& \geq \frac{\gamma}{2} Tr(W^{r}(W^{r})^T(W^{r}(W^{r})^T)^{- \frac{1}{2}}) - \gamma Tr(W^{r}(W^{r})^T) 
\end{aligned}
\end{equation}

By deducting \eqref{lemma} from \eqref{similar}, we arrive at:

\begin{equation}
\small
\label{objdecrease}
\begin{aligned}
& \sum_{l=1}^t (Tr[(F_l^{r+1} - Y_l)^TU_l(F_l^{r+1} - Y_l)] + Tr((F_l^{r+1})^TL_lF_l^{r+1}) \\
& + \alpha (\|W_l^{r+1}\|_{2,1} + \beta \|X_l^TW_l^{r+1} + \mathbf{1}_{n_l}(b_l^{r+1})^T - F_l^{r+1} \|_F^2 )) \\
& + \gamma \|W^{r+1}\|_{*} \\
& \leq \sum_{l=1}^t (Tr[(F_l^{r} - Y_l)^TU_l(F_l^{r} - Y_l)] + Tr((F_l^{r})^TL_lF_l^{r}) \\
& + \alpha (\|W_l^r\|_{2,1} + \beta \|X_l^TW_l^{r} + \mathbf{1}_{n_l}(b_l^{r})^T - F_l^{r} \|_F^2 )) \\
& + \gamma \|W^r\|_{*})).
\end{aligned}
\end{equation}

Eq. \eqref{objdecrease} indicates that the objective function value decreases after each iteration. Thus, we have proved Theorem 1.
\end{proof}

Having Theorem 1, we can easily see that the algorithm converges.

\section{Experiments}
In this section, experiments are conducted to evaluate the performance of our algorithm on video classification, image annotation, human motion recognition and 3D motion data analysis, respectively. Additional experiments are conducted to study the performance \wrt influence of number of selected features and parameter sensitivity.

\subsection{Experiment Setup}

We use four different datasets in the experiment, including one video datasets CCV \cite{bib_ccv}, one image datasets NUSWIDE \cite{nus-wide-civr09}, one human motion dataset HMDB \cite{bib_HMDB} and one 3D motion skeleton dataset HumanEva \cite{bib_humaneva}. In order to demonstrate advantages of our algorithm, we compare its performance with the following approaches.

\begin{enumerate}
  \item \textbf{All Features:} We directly use the original features without feature selection as a baseline.
  
  \item \textbf{Fisher Score:} This is a classical feature selection method, which evaluates importances of features and selects the most discriminating features one by one \cite{bib_Duda}.
  
  \item \textbf{Feature Selection via Joint $l_{2,1}$-Norms Minimization (FSNM):} Joint $l_{2,1}$-norm minimization is utilized on both loss function and regularization for joint feature selection \cite{bib_Nie}. 
  
  \item \textbf{SPEC:} It uses spectral graph theory to conduct feature selection \cite{bib_spec}.
  
  \item \textbf{Feature Selection with Shared Information among multiple tasks (FSSI):} It simultaneously learns multiple feature selection functions of different tasks in a joint framework \cite{bib_Yang_Yi}. Hence, it is capable to utilize shared knowledge between multiple tasks to facilitate decision making.
  
  \item \textbf{Locality Sensitive Semi-supervised Feature Selection (LSDF):} This is a semi-supervised feature selection based on two graph constructions, \ie within-class graph and between-class graph \cite{bib_lsdf}.
  
  \item \textbf{Structural Feature Selection with Sparsity (SFSS):} It combines strengths of joint feature selection and semi-supervised learning into a single framework \cite{bib_ma}. Labeled and unlabeled training data are both utilized for feature selection. Meanwhile, correlations between different features are taken into consideration. 
  
\end{enumerate}

In the experiments, a training set for each dataset is randomly generated consisting of $n$ samples, among which $m\%$ samples are labeled. The detailed settings are shown in Table \ref{setting}. The remaining data are used as testing data. We independently repeat the experiment 5 times and report the average results.

\begin{table}[tb]
\caption{SETTINGS OF THE TRAINING SETS}
\centering
\begin{tabular}{|c||r|c|c|}
\hline
Dataset & Size(n) & Labeled Percentage ($m$) \\
\hline
CCV & $4,000$ & $1, 5, 10, 25, 50, 100$ \\
\hline
NUS-WIDE & $5,000$ & $1, 5, 10, 25, 50, 100$ \\
\hline
HMDB & $3,000$ & $2, 5, 10, 25, 50, 100$ \\
\hline
HumanEVA & $3,000$ & $1, 5, 10, 25, 50, 100$ \\
\hline
\end{tabular}
\label{setting}
\end{table}

We have to tune two types of parameters in the experiments. One is the parameter $k$ that specifies $k$ nearest neighbors used to compute graph Laplacian. Following \cite{bib_ma}, we fix it at $15$. The other parameter is the regularization parameters, $\alpha$, $\beta$ and $\gamma$, which are shown in the objective function \eqref{finalobj}. These parameters are tuned in the range of $\{ 10^{-6}, 10^{-4}, 10^{-2}, 10^0, 10^{+2}, 10^{+4}, 10^{+6} \}$ and the best results are reported. Linear SVM is used as classifier. Mean average precision (MAP) is used to evaluate the performance.

\subsection{Video Classification}
First, we compare the performances of different algorithms in terms of video classification task using Columbia Consumer Video dataset (CCV) \cite{bib_ccv}. It consists of $9,317$ web videos over 20 semantic categories, in which $4,659$ videos are used as training data and $4, 658$ videos are used as testing data. The semantic categories include events like "basketball" and "parade", scenes like "beach" and "playground", and objects like "cat" and "dog", based on which we generate three different classification tasks. Since the original videos of this dataset have not been available on the internet, we directly use the STIP features with $5,000$ dimensional BoWs representation provided by \cite{bib_ccv}. We set the number of selected features as $\{2500, 3000, \cdots , 4500, 5000\}$ for all the algorithms, and report the best results.


\begin{table*}[!ht]
\renewcommand{\arraystretch}{1.3}
\caption{PERFORMANCE COMPARISON OF Video Classification (MAP $\pm$ STD) \wrt 1\%, 5\% AND 10\% LABELED TRAINING DATA}
\centering
\subtable[Subject 1]{
\begin{tabular}{|l||c|c|c|c|c|c|}
\hline
   &  1\% labeled  & 5\% labeled   &  10\% labeled & 25\% labeled &   50\% labeled & 100\% labeled \\
\hline
All Features  &  $0.083 \pm 0.019$      &  $0.238 \pm 0.023$    &    $0.295 \pm 0.028$ &  $0.352 \pm 0.025$  &  $0.423 \pm 0.021$  &  $0.476 \pm 0.024$  \\
\hline
FISHER &  $0.091 \pm 0.025$  &  $0.241 \pm 0.028$  &  $0.302 \pm 0.019$ & $0.357 \pm 0.024$ & $0.428 \pm 0.021$  &  $0.479 \pm 0.026$\\
\hline
SPEC & $0.086 \pm 0.019$ & $0.241 \pm 0.021$  &  $0.0307 \pm 0.023$  & $0.359 \pm 0.019$ & $0.431 \pm 0.021$  & $0.478 \pm 0.023$ \\
\hline
FSNM  &  $0.087 \pm 0.018$  &  $0.243 \pm 0.022$  &  $0.309 \pm 0.026$ & $0.361 \pm 0.024$  & $0.434 \pm 0.025$ & $0.480 \pm 0.021$ \\
\hline
FSSI &  $0.105 \pm 0.022$  &  $0.245 \pm 0.019$ &   $0.314 \pm 0.024$ & $0.373 \pm 0.026$ & $0.427 \pm 0.023$ & $0.482 \pm 0.024$\\
\hline
SFMC & $\mathbf{0.112 \pm 0.013}$  & $\mathbf{0.292 \pm 0.018}$  & $\mathbf{0.335 \pm 0.012}$ & $\mathbf{0.396 \pm 0.015}$ & $\mathbf{0.459 \pm 0.017}$ & $\mathbf{0.498 \pm 0.019}$  \\
\hline
\end{tabular}
}
\subtable[Subject 2]{

\begin{tabular}{|l||c|c|c|c|c|c|}
\hline
   &  1\% labeled  & 5\% labeled   &  10\% labeled  & 25\% labeled   & 50\% labeled  & 100\% labeled \\
\hline
All Features  &  $0.316 \pm 0.024$      &  $0.421 \pm 0.026$    &  $0.455 \pm 0.021$ & $0.486 \pm 0.23$  & $0.534 \pm 0.027$  & $0.562 \pm 0.025$      \\
\hline
FISHER &  $0.320 \pm 0.037$  &  $0.438 \pm 0.029$  & $0.478 \pm 0.019$ & $0.492 \pm 0.025$ &  $0.545 \pm 0.021$ &  $0.568 \pm 0.023$\\
\hline
SPEC & $0.322 \pm 0.023$ & $0.441 \pm 0.025$ & $0.482 \pm 0.023$ & $0.493 \pm 0.023$  &$0.548 \pm 0.019$  & $0.570 \pm 0.022$ \\
\hline
FSNM  &  $0.324 \pm 0.019$  &  $0.449 \pm 0.024$  &  $0.486 \pm 0.026$ & $0.498 \pm 0.028$ & $0.551 \pm 0.027$  &  $0.572 \pm 0.021$ \\
\hline
FSSI &  $0.336 \pm 0.028$ & $0.458 \pm 0.022$  &  $0.495 \pm 0.019$ & $0.516 \pm 0.025$ &  $0.562 \pm 0.023$  &  $0.578 \pm 0.027$ \\
\hline
SFMC & $\mathbf{0.387 \pm 0.021}$  & $\mathbf{0.524 \pm 0.020}$  & $\mathbf{0.535 \pm 0.012}$ & $\mathbf{0.564 \pm 0.025}$  & $\mathbf{0.594 \pm 0.027}$  & $\mathbf{0.602 \pm 0.023}$ \\
\hline
\end{tabular}
}
\subtable[Subject 3]{
\begin{tabular}{|l||c|c|c|c|c|c|}
\hline
   &  1\% labeled  & 5\% labeled   &  10\% labeled & 25\% labeled   & 50\% labeled  & 100\% labeled \\
\hline
All Features  &   $0.381 \pm 0.017$     &   $0.848 \pm 0.028$   &  $0.857 \pm 0.024$ & $0.867 \pm 0.025$  &  $0.895 \pm 0.021$  & $0.910 \pm 0.026$     \\
\hline
FISHER &  $0.392 \pm 0.021$  &  $0.855 \pm 0.019$  &  $0.862 \pm 0.023$ & $0.873 \pm 0.026$  &  $0.900 \pm 0.024$  &  $0.918 \pm 0.025$ \\
\hline
SPEC & $0.396 \pm 0.023$ & $0.858 \pm 0.024$  & $0.868 \pm 0.019$ & $0.878 \pm 0.021$ & $0.905 \pm 0.023$  &  $0.913 \pm 0.022$\\
\hline
FSNM  &  $0.398 \pm 0.018$  & $0.861 \pm 0.022$   & $0.871 \pm 0.021$  & $0.880 \pm 0.026$  & $0.910 \pm 0.023$  & $0.921 \pm 0.019$   \\
\hline
FSSI & $0.424 \pm 0.024$  & $0.864 \pm 0.018$  &  $0.873 \pm 0.018$  &  $0.884 \pm 0.022$  & $0.905 \pm 0.021$  &  $0.921 \pm 0.021$  \\
\hline
SFMC & $\mathbf{0.479 \pm 0.012}$  &  $\mathbf{0.874 \pm 0.010}$ & $\mathbf{0.886 \pm 0.016}$ & $\mathbf{0.904 \pm 0.19}$  &  $\mathbf{0.912 \pm 0.017}$  & $\mathbf{0.925 \pm 0.014}$ \\
\hline
\end{tabular}
}
\label{videoclasstable}
\end{table*}

We show the video classification results when different percentages of labeled training data are used in Table \ref{videoclasstable}. From the experimental results, we can get the following observations: 1) The performances of all the compared algorithms increase when we increase the number of labeled training data. 2) The proposed algorithm consistently gains the best performance. 3) With 5\% labeled training data, our algorithm significantly outperforms other algorithms. For example, for subject 2, our algorithm is better than the second best algorithm by 6.6\%. Yet the proposed algorithm gains smaller advantage with more labeled training data. 

\subsection{Image Annotation}

We use NUS-WIDE dataset \cite{nus-wide-civr09} to test the performance of our algorithm. This dataset includes 269648 images of 81 concepts. A 500 dimension Bag-of-Words feature based on SIFT descriptor is used in this experiment. We take each concept as a separate annotation task, thus resulting in 81 tasks. It is difficult to report all the results of these 81 tasks, so the average result is reported. In this experiment, we set the number of selected features as $\{250, 275, \cdots , 475 , 500\}$ and report the best results.

We illustrate the experimental results in Table \ref{TAB_NUSWIDE}. From the experimental results, we can observe that the proposed method gains better performance than the other compared algorithms. We give the detailed results with 1\%, 5\% and 10\% labeled training data. It can be seen that the proposed algorithm is more competitive with less labeled training data.


\begin{table*}[!ht]
\renewcommand{\arraystretch}{1.3}
\caption{PERFORMANCE COMPARISON OF Image Annotation (MAP $\pm$ STD) \wrt 1\%, 5\% AND 10\% LABELED TRAINING DATA}
\centering
\begin{tabular}{|l||c|c|c|c|c|c|}
\hline
   &  1\% labeled  & 5\% labeled   &  10\% labeled & 25\% labeled   & 50\% labeled  & 100\% labeled  \\
\hline
All Features  &  $0.045 \pm 0.009$      &  $0.066 \pm 0.007$    &    $0.089 \pm 0.008$    & $0.096 \pm 0.007$  &  $0.105 \pm 0.006$  &  $0.115 \pm 0.008$\\
\hline
FISHER &  $0.049 \pm 0.008$  &  $0.069 \pm 0.005$  &  $0.091 \pm 0.009$  & $0.102 \pm 0.007$  & $0.108 \pm 0.009$ &  $0.117 \pm 0.007$ \\
\hline
SPEC & $0.051 \pm 0.010$ & $0.071 \pm 0.012$ & $0.093 \pm 0.009$ & $0.103 \pm 0.011$ & $0.116 \pm 0.007$  & $0.119 \pm 0.008$ \\
\hline
FSNM  &  $0.052 \pm 0.011$  &  $0.073 \pm 0.008$  &  $0.095 \pm 0.006$  & $0.103 \pm 0.009$  & $0.112 \pm 0.010$  & $0.121 \pm 0.009$ \\
\hline
FSSI &  $0.058 \pm 0.005$  &  $0.079 \pm 0.009$ &   $0.104 \pm 0.007$  & $0.110 \pm 0.008$ &  $0.121 \pm 0.008$  & $0.129 \pm 0.011$ \\
\hline
SFMC & $\mathbf{0.066 \pm 0.003}$  & $\mathbf{0.091 \pm 0.004}$  & $\mathbf{0.108 \pm 0.002}$ & $\mathbf{0.115 \pm 0.006}$  & $\mathbf{0.123 \pm 0.008}$  &  $\mathbf{0.131 \pm 0.009}$ \\
\hline
\end{tabular}
\label{TAB_NUSWIDE}
\end{table*}

\begin{table*}[!ht]
\renewcommand{\arraystretch}{1.3}
\caption{PERFORMANCE COMPARISON OF Human Motion Recognition (MAP $\pm$ STD) \wrt 2\%, 5\% AND 10\% LABELED TRAINING DATA}
\centering
\subtable[Subject 1]{
\begin{tabular}{|l||c|c|c|c|c|c|}
\hline
   &  2\% labeled  & 5\% labeled   &  10\% labeled & 25\% labeled   & 50\% labeled  & 100\% labeled \\
\hline
All Features  &  $0.214 \pm 0.017$      &  $0.231 \pm 0.019$    &    $0.286 \pm 0.015$    &  $0.334 \pm 0.018$  &  $0.448 \pm 0.021$  & $0.486 \pm 0.023$\\
\hline
FISHER &  $0.285 \pm 0.021$  &  $0.326 \pm 0.023$  &  $0.359 \pm 0.022$  & $0.401 \pm 0.024$  & $0.466 \pm 0.019$  &  $0.494 \pm 0.025$  \\
\hline
SPEC & $0.292 \pm 0.023$ & $0.321 \pm 0.024$ & $0.346 \pm 0.021$ & $0.378 \pm 0.024$ & $0.471 \pm 0.019$ & $0.496 \pm 0.018$\\
\hline
FSNM  &  $0.298 \pm 0.019$  &  $0.316 \pm 0.021$  &  $0.339 \pm 0.019$ & $0.367 \pm 0.024$  & $0.463 \pm 0.023$  &  $0.494 \pm 0.025$ \\
\hline
FSSI &  $0.314 \pm 0.018$  &  $0.338 \pm 0.019$ &   $0.365 \pm 0.023$ & $0.399 \pm 0.021$  &  $0.489 \pm 0.024$  &  $0.509 \pm 0.022$ \\
\hline
SFMC & $\mathbf{0.349 \pm 0.015}$  & $\mathbf{0.362 \pm 0.019}$  & $\mathbf{0.389 \pm 0.018}$ & $\mathbf{0.423 \pm 0.021}$  & $\mathbf{0.512 \pm 0.020}$  & $\mathbf{0.518 \pm 0.017}$  \\
\hline
\end{tabular}
}
\subtable[Subject 2]{

\begin{tabular}{|l||c|c|c|c|c|c|}
\hline
   &  2\% labeled  & 5\% labeled   &  10\% labeled & 25\% labeled   & 50\% labeled  & 100\% labeled\\
\hline
All Features  &   $0.271 \pm 0.018$     &  $0.475 \pm 0.019$    &  $0.518 \pm 0.025$      & $0.539 \pm 0.023$  & $0.577 \pm 0.022$  &  $0.658 \pm 0.023$\\
\hline
FISHER & $0.274 \pm 0.023$   & $0.479 \pm 0.022$   &  $0.534 \pm 0.019$  &$0.562 \pm 0.023$  & $0.593 \pm 0.021$ & $0.667 \pm 0.018$\\
\hline
SPEC & $0.279 \pm 0.024$ & $0.481 \pm 0.021$ & $0.548 \pm 0.023$ & $0.569 \pm 0.019$ & $0.598 \pm 0.022$ & $0.672 \pm 0.026$ \\
\hline
FSNM  &  $0.283 \pm 0.021$  &  $0.482 \pm 0.019$  & $0.559 \pm 0.025$  & $0.575 \pm 0.024$  &  $0.602 \pm 0.023$  &  $0.679 \pm 0.024$  \\
\hline
FSSI &  $0.286 \pm 0.019$ & $0.501 \pm 0.023$  &  $0.569 \pm 0.019$ &  $0.586 \pm 0.021$  & $0.608 \pm 0.022$  &  $0.682 \pm 0.017$ \\
\hline
SFMC & $\mathbf{0.397 \pm 0.016}$  & $\mathbf{0.580 \pm 0.014}$  & $\mathbf{0.623 \pm 0.021}$ &  $\mathbf{0.641 \pm 0.019}$ & $\mathbf{0.652 \pm 0.021}$ &  $\mathbf{0.709 \pm 0.024}$\\
\hline
\end{tabular}
}
\subtable[Subject 3]{

\begin{tabular}{|l||c|c|c|c|c|c|}
\hline
   &  2\% labeled  & 5\% labeled   &  10\% labeled & 25\% labeled   & 50\% labeled  & 100\% labeled \\
\hline
All Features  &   $0.198 \pm 0.021$     &  $0.229 \pm 0.018$    &  $0.278 \pm 0.023$      &  $0.335 \pm 0.022$  &  $0.354 \pm 0.019$  &  $0.433 \pm 0.017$\\
\hline
FISHER & $0.214 \pm 0.017$   & $0.249 \pm 0.016$   &  $0.286 \pm 0.019$ & $0.340 \pm 0.021$  &  $0.359 \pm 0.023$  &  $0.433 \pm 0.019$ \\
\hline
SPEC & $0.221 \pm 0.019$ & $0.247 \pm 0.021$ & $0.291 \pm 0.025$ & $0.331 \pm 0.022$ & $0.357 \pm 0.021$ & $0.439 \pm 0.018$ \\
\hline
FSNM  &  $0.210 \pm 0.021$  &  $0.251 \pm 0.022$  & $0.294 \pm 0.019$  &  $0.337 \pm 0.024$  & $0.354 \pm 0.018$  &  $0.442 \pm 0.020$ \\
\hline
FSSI &  $0.232 \pm 0.019$ & $0.276 \pm 0.021$  &  $0.301 \pm 0.023$ &  $0.342 \pm 0.026$  &  $0.370 \pm 0.021$  &  $0.439 \pm 0.018$ \\
\hline
SFMC & $\mathbf{0.239 \pm 0.019}$  & $\mathbf{0.288 \pm 0.015}$  & $\mathbf{0.315 \pm 0.016}$  & $\mathbf{0.347 \pm 0.019}$  &  $\mathbf{0.372 \pm 0.021}$ &  $\mathbf{0.451 \pm 0.022}$ \\
\hline
\end{tabular}
}
\subtable[Subject 4]{

\begin{tabular}{|l||c|c|c|c|c|c|}
\hline
   &  2\% labeled  & 5\% labeled   &  10\% labeled  & 25\% labeled   & 50\% labeled  & 100\% labeled \\
\hline
All Features  &   $0.194 \pm 0.019$     &  $0.204 \pm 0.018$    &  $0.219 \pm 0.023$      & $0.246 \pm 0.021$  & $0.274 \pm 0.017$  &  $0.332 \pm 0.024$\\
\hline
FISHER & $0.210 \pm 0.023$   & $0.224 \pm 0.017$   &  $0.230 \pm 0.019$ & $0.247 \pm 0.021$  & $0.274 \pm 0.024$  &  $0.334 \pm 0.023$\\
\hline
SPEC & $0.204 \pm 0.017$ & $0.217 \pm 0.021$ & $0.225 \pm 0.024$ & $0.243 \pm 0.027$ & $0.271 \pm 0.019$  & $0.339 \pm 0.016$ \\
\hline
FSNM  &  $0.195 \pm 0.021$  &  $0.206 \pm 0.024$  & $0.212 \pm 0.023$ & $0.246 \pm 0.019$  &  $0.278 \pm 0.023$  &  $0.343 \pm 0.018$  \\
\hline
FSSI &  $0.216 \pm 0.017$ & $0.221 \pm 0.023$  &  $0.235 \pm 0.019$ &  $0.256 \pm 0.023$ & $0.284 \pm 0.025$  & $0.351 \pm 0.019$ \\
\hline
SFMC & $\mathbf{0.226 \pm 0.019}$  & $\mathbf{0.238 \pm 0.018}$  & $\mathbf{0.251 \pm 0.024}$ &  $\mathbf{0.264 \pm 0.021}$  &  $\mathbf{0.292 \pm 0.023}$ & $\mathbf{0.359 \pm 0.019}$ \\
\hline
\end{tabular}
}
\subtable[Subject 5]{

\begin{tabular}{|l||c|c|c|c|c|c|}
\hline
   &  2\% labeled  & 5\% labeled   &  10\% labeled & 25\% labeled   & 50\% labeled  & 100\% labeled \\
\hline
All Features  &   $0.256 \pm 0.019$     &  $0.305 \pm 0.021$    &  $0.342 \pm 0.025$      & $0.386 \pm 0.024$ &  $0.467 \pm 0.023$  & $0.503 \pm 0.021$\\
\hline
FISHER & $0.302 \pm 0.018$   & $0.360 \pm 0.023$   &  $0.375 \pm 0.021$ &  $0.394 \pm 0.024$  &  $0.475 \pm 0.023$ & $0.511 \pm 0.025$\\
\hline
SPEC & $0.274 \pm 0.023$ & $0.332 \pm 0.018$ & $0.365 \pm 0.021$ & $0.391 \pm 0.027$ & $0.478 \pm 0.026$ & $0.509 \pm 0.022$ \\
\hline
FSNM  &  $0.269 \pm 0.019$  &  $0.316 \pm 0.022$  & $0.356 \pm 0.019$ & $0.389 \pm 0.023$  &  $0.471 \pm 0.021$  & $0.506 \pm 0.024$  \\
\hline
FSSI &  $0.342 \pm 0.022$ & $0.377 \pm 0.023$  &  $0.397 \pm 0.019$  &  $0.413 \pm 0.021$  &  $0.512 \pm 0.025$  &  $0.528 \pm 0.019$\\
\hline
SFMC & $\mathbf{0.356 \pm 0.015}$  & $\mathbf{0.385 \pm 0.018}$  & $\mathbf{0.401 \pm 0.022}$ & $\mathbf{0.421 \pm 0.024}$  & $\mathbf{0.528 \pm 0.021}$  &  $\mathbf{0.541 \pm 0.019}$ \\
\hline
\end{tabular}
}
\label{TAB_HMR}
\end{table*}

\subsection{Human Motion Recognition}


We use HMDB video dataset \cite{bib_HMDB} to compare the algorithms in terms of human motion recognition. HMDB dataset consists of 6,766 videos which are associated with 51 distinct action categories. These categories can be categorized into five groups: 1) General facial actions, 2) Facial actions with object manipulation, 3) General body movements, 4) Body movements with object interaction, 5) Body movements for human interaction. Therefore, in this experiment, the five groups are considered as five different tasks. Heng \etal claim that motion boundary histograms (MBH) is an efficient way to suppress camera motion in \cite{bib_hengwang} and thus it is used to process the videos. A 2000 dimension Bag-of-Words feature is generated to represent the original data. We set the number of selected features as $\{1000, 1200, \cdots , 1800, 2000\}$ for all the algorithms and report the best results.

Table \ref{TAB_HMR} shows the experiment results of human motion recognition. From Table \ref{TAB_HMR}, we observe that our method outperforms other compared algorithms. This experiment can further provide evidence that our algorithm is more advantageous with insufficient number of labeled training data.

\subsection{3D Motion Data Analysis}
We evaluate the performance of our algorithm in terms of 3D motion data analysis using Human-Eva 3D motion database. There are five different types of actions in this database, including $boxing,~gesturing,~walking,~throw$-$catch~and~jogging$. Following the work in \cite{bib_Ning} \cite{bib_Yang_Yi2}, we randomly select $10,000$ samples of two subjects ($5,000$ per subject). We encode each action as a collection of 16 joint coordinates in 3D space and obtain a 48-dimensional feature vector. Joint Relative Features between different joints are computed on top of that, resulting a feature vector with 120 dimensions. We combine the two kinds of feature vectors and get a 168-dimensional feature. In this experiment, we consider the two subjects as two different tasks. The number of selected features are tuned from $\{100, 110, \cdots , 160\}$.

The experiment results are shown in Table \ref{TAB_HumanEVA}. Table \ref{TAB_HumanEVA} gives detailed results when 1\%, 5\% and 10\% training data are labeled. From the experiment results, we can observe that our algorithm consistently outperform the other compared algorithms and obtains more performance gain when small number of training data are labeled.


\begin{table*}[!ht]
\renewcommand{\arraystretch}{1.3}
\caption{PERFORMANCE COMPARISON OF 3D MOTION DATA ANALYSIS (MAP $\pm$ STD) \wrt 1\%, 5\% AND 10\% LABELED TRAINING DATA}
\centering
\subtable[Subject 1]{
\begin{tabular}{|l||c|c|c|c|c|c|}
\hline
   &  1\% labeled  & 5\% labeled   &  10\% labeled & 25\% labeled   & 50\% labeled  & 100\% labeled \\
\hline
All Features  &  $0.776 \pm 0.027$      &  $0.849 \pm 0.021$    &    $0.871 \pm 0.025$    &  $0.886 \pm 0.024$  &  $0.898 \pm 0.021 $  &  $0.900 \pm 0.026$\\
\hline
FISHER &  $0.777 \pm 0.019$  &  $0.861 \pm 0.029$  &  $0.880 \pm 0.019$  & $0.887 \pm 0.024$ & $0.901 \pm 0.025$  &  $0.905 \pm 0.023$ \\
\hline
SEPC & $0.775 \pm 0.025$ & $0.857 \pm 0.021$ & $0.881 \pm 0.019$ & $0.893 \pm 0.023$ & $0.903 \pm 0.018$ & $0.911 \pm 0.022$ \\
\hline
FSNM  &  $0.778 \pm 0.034$  &  $0.851 \pm 0.024$  &  $0.883 \pm 0.023$  &  $0.897 \pm 0.019$ & $0.910 \pm 0.026$  &  $0.918 \pm 0.023$\\
\hline
FSSI &  $0.780 \pm 0.028$  &  $0.889 \pm 0.024$ &   $0.894 \pm 0.025$ &  $0.904 \pm 0.023$ & $0.912 \pm 0.026$ & $0.921 \pm 0.021$ \\
\hline
SFMC & $\mathbf{0.785 \pm 0.018}$  & $\mathbf{0.892 \pm 0.021}$  & $\mathbf{0.908 \pm 0.012}$ & $\mathbf{0.912 \pm 0.021}$  & $\mathbf{0.917 \pm 0.018}$  &  $\mathbf{0.925 \pm 0.020}$ \\
\hline
\end{tabular}
}
\subtable[Subject 2]{

\begin{tabular}{|l||c|c|c|c|c|c|}
\hline
   &  1\% labeled  & 5\% labeled   &  10\% labeled & 25\% labeled   & 50\% labeled  & 100\% labeled \\
\hline
All Features  & $0.819 \pm 0.024$     &  $0.860 \pm 0.022$    &  $0.909 \pm 0.029$      &  $0.928 \pm 0.026$  &  $0.946 \pm 0.019$  &  $0.950 \pm 0.023$\\
\hline
FISHER & $0.835 \pm 0.021$   & $0.864 \pm 0.020$   &  $0.909 \pm 0.025$ & $0.926 \pm 0.018$  &  $0.946 \pm 0.023$  &  $0.951 \pm 0.021$ \\
\hline
SPEC & $0.831 \pm 0.023$ & $0.868 \pm 0.019$ & $0.913 \pm 0.026$ & $0.929 \pm 0.021$ & $0.957 \pm 0.024$ & $0.959 \pm 0.027$ \\
\hline
FSNM  &  $0.836 \pm 0.025$  &  $0.870 \pm 0.018$  & $0.921 \pm 0.023$  & $0.938 \pm 0.021$  & $0.964 \pm 0.024$  &  $0.965 \pm 0.022$ \\
\hline
FSSI &  $0.836 \pm 0.020$ & $0.884 \pm 0.026$  &  $0.922 \pm 0.024$ & $0.947 \pm 0.022$  & $0.961 \pm 0.023$  &  $0.962 \pm 0.019$ \\
\hline
SFMC & $\mathbf{0.847 \pm 0.023}$  & $\mathbf{0.894 \pm 0.019}$  & $\mathbf{0.948 \pm 0.026}$ & $\mathbf{0.954 \pm 0.023}$  & $\mathbf{0.973 \pm 0.025}$  &  $\mathbf{0.975 \pm 0.022}$ \\
\hline
\end{tabular}
}
\label{TAB_HumanEVA}
\end{table*}

\subsection{Comparison with Other Semi-Supervised Feature Selection Methods}

In this section, experiments are conducted on CCV to compare the proposed algorithm with two state-of-the-art semi-supervised feature selection algorithms. Following the above experiments, 1\%, 5\%, 10\%, 25\%, 50\% and 100\% training data are labeled in this experiment. We show the experiment results in Figure \ref{compareSemi}. We can observe that our method consistently outperforms both LSDF and SFSS. Visible advantages are gained when only few training data are labeled, such as 1\% or 5\% labeled training data. From this result, we can conclude that it is beneficial to leverage shared information from other related tasks when insufficient number of training data are labeled.
 
\begin{figure*}[!ht]
\centering
\subfigure[]{
\includegraphics[width=0.32\linewidth]{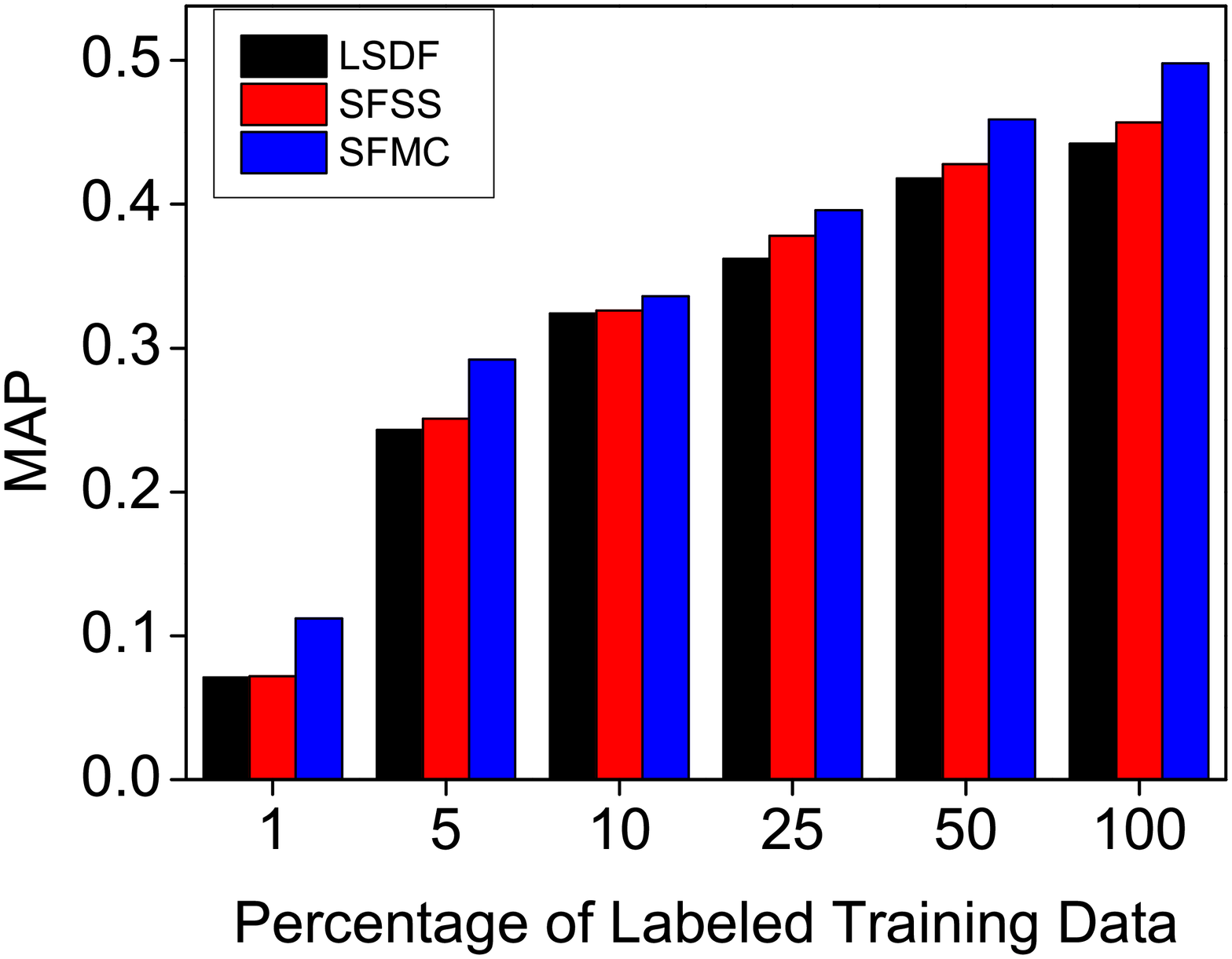}}
\subfigure[]{
\includegraphics[width=0.32\linewidth]{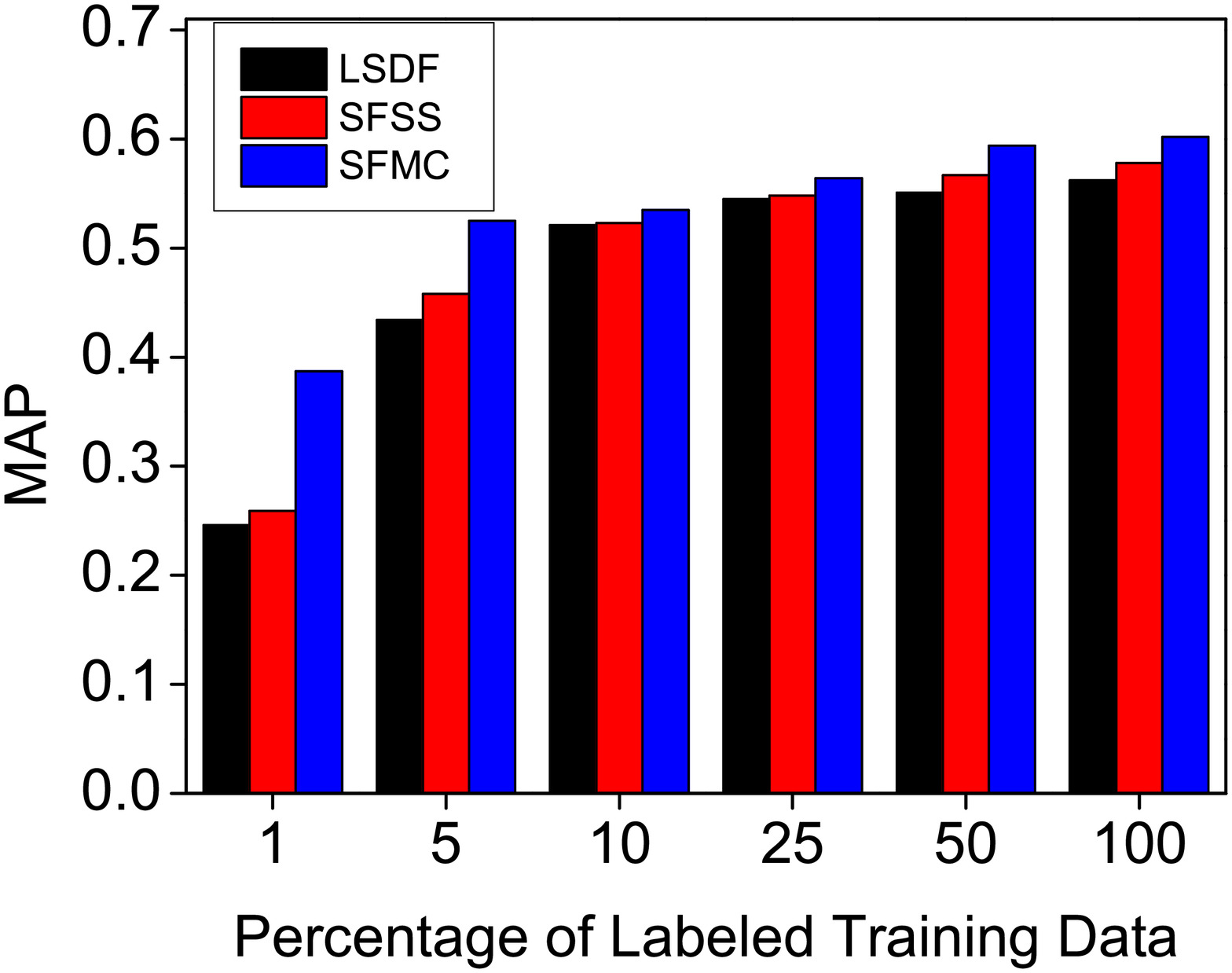}}
\subfigure[]{
\includegraphics[width=0.32\linewidth]{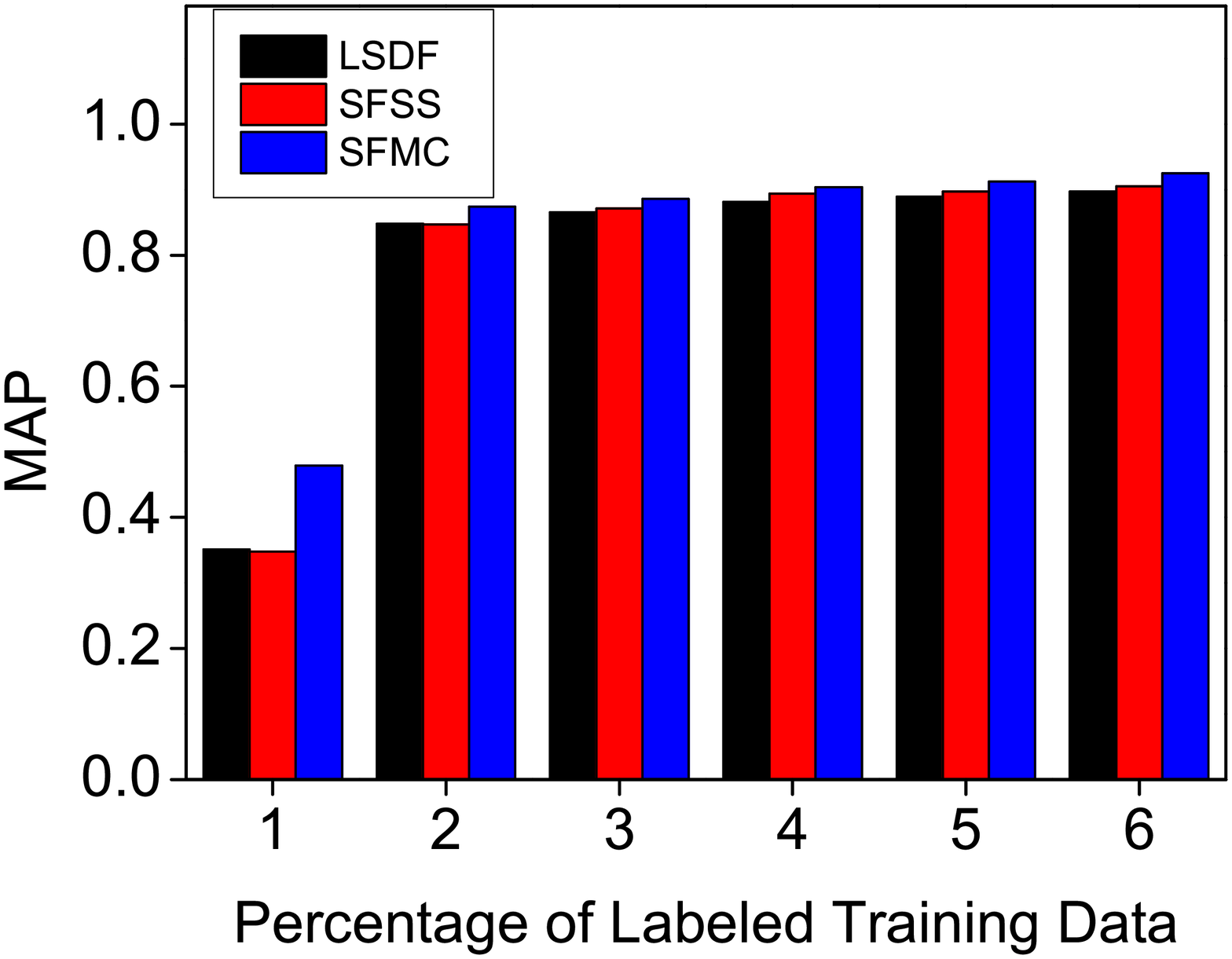}}
\caption{PERFORMANCE COMPARISON OF 3D MOTION DATA ANALYSIS \wrt PERCENTAGE OF LABELED TRAINING DATA. WE CAN OBSERVE THAT THE PROPOSED ALGORITHM YIELDS TOP PERFORMANCES ON BOTH THE TASKS. (a) SUBJECT 1. (b) SUBJECT 22.} 
\label{compareSemi}
\end{figure*}

\subsection{Parameter Sensitivity}
We study the influences of the four parameters $\alpha$, $\beta$, $\gamma$ and the number of selected features using CCV database with $1\%$ labeled training data. First, we fix $\gamma$ and the number of selected features at 1 and 3500 respectively, which are the median values of the tuned range of the parameters. The experimental results are shown in Figure \ref{parsen_alphabeta}. It can be seen that the performance of our algorithm varies when the parameters ($\alpha$ and $\beta$) change. More specifically, MAP is higher when $\alpha$ and $\beta$ are comparable. Then, $\alpha$ and $\beta$ are fixed. Figure \ref{parsen_gammanum} shows the parameter sensitivity results. Note that the shared information among multiple feature selection functions $\{W_1, \cdots , W_t\}$ by the parameter $\gamma$. From this figure, we can see that mining correlations between multiple related tasks is beneficial to improve the performance. We can also notice that better performances are gained when the number of features is around 3500 and 4000.  

\begin{figure*}[!ht]
\centering
\subfigure[]{
\includegraphics[width=0.32\linewidth]{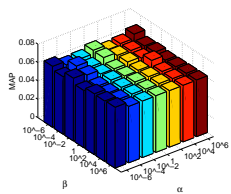}}
\subfigure[]{
\includegraphics[width=0.32\linewidth]{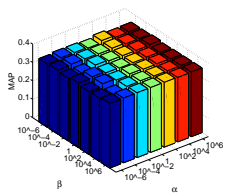}}
\subfigure[]{
\includegraphics[width=0.32\linewidth]{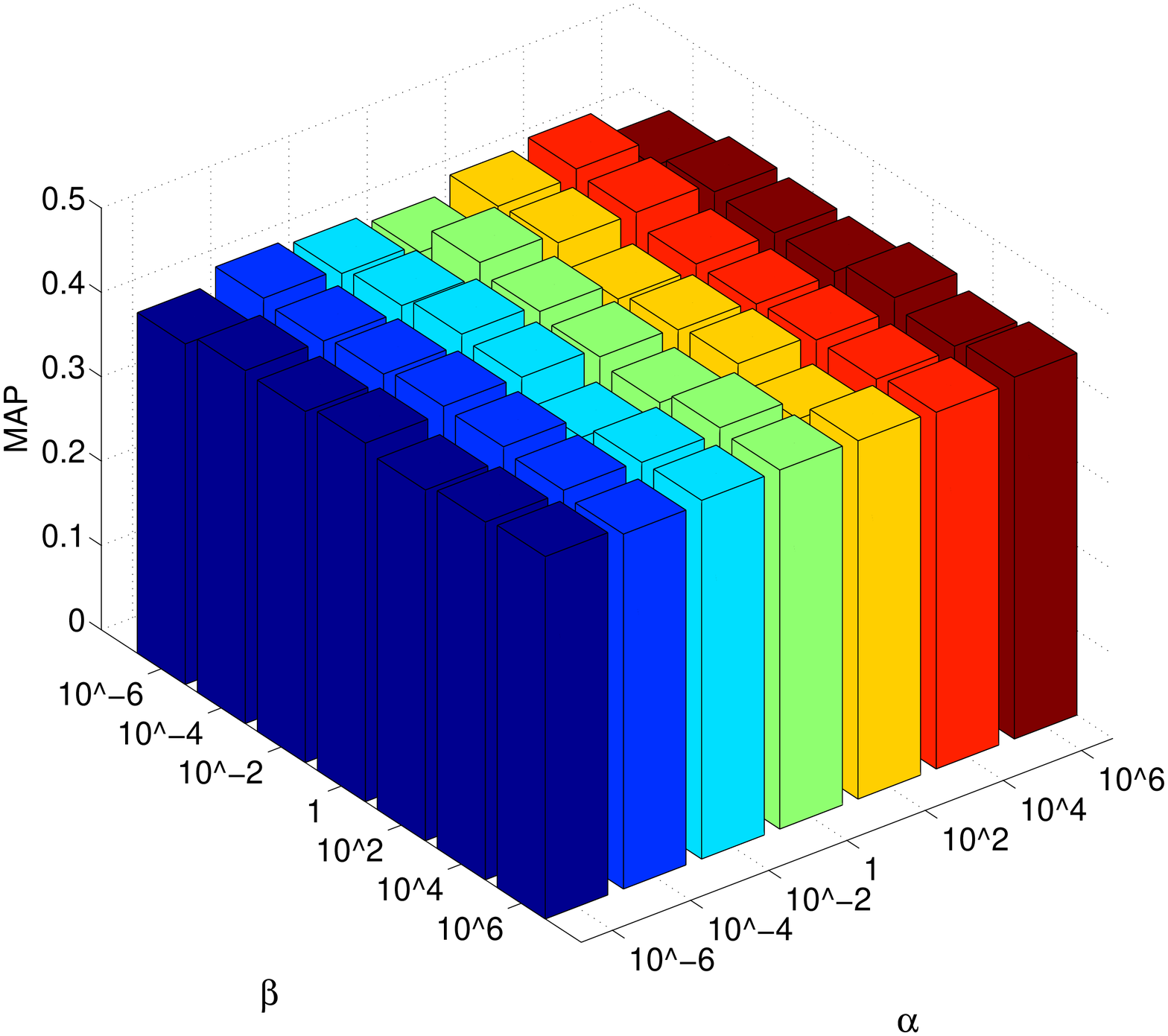}}
\caption{MAP with different $\alpha$ and $\beta$ while keeping $\gamma$ and feature numbers fixed on CCV database. (a) SUBJECT 1. (b) SUBJECT 2. (c) SUBJECT 3} 
\label{parsen_alphabeta}
\end{figure*}

\begin{figure*}[!ht]
\centering
\subfigure[]{
\includegraphics[width=0.32\linewidth]{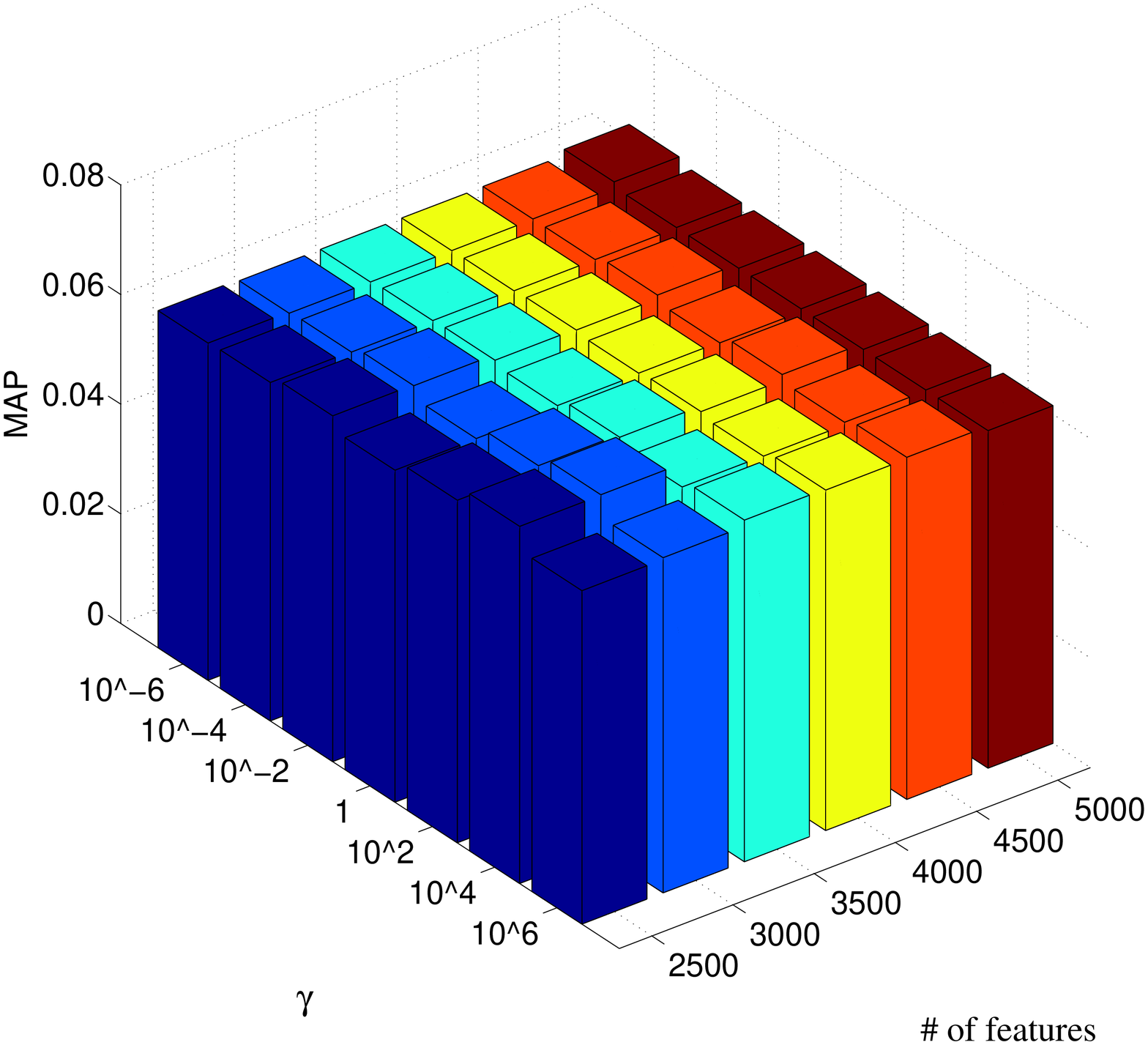}}
\subfigure[]{
\includegraphics[width=0.32\linewidth]{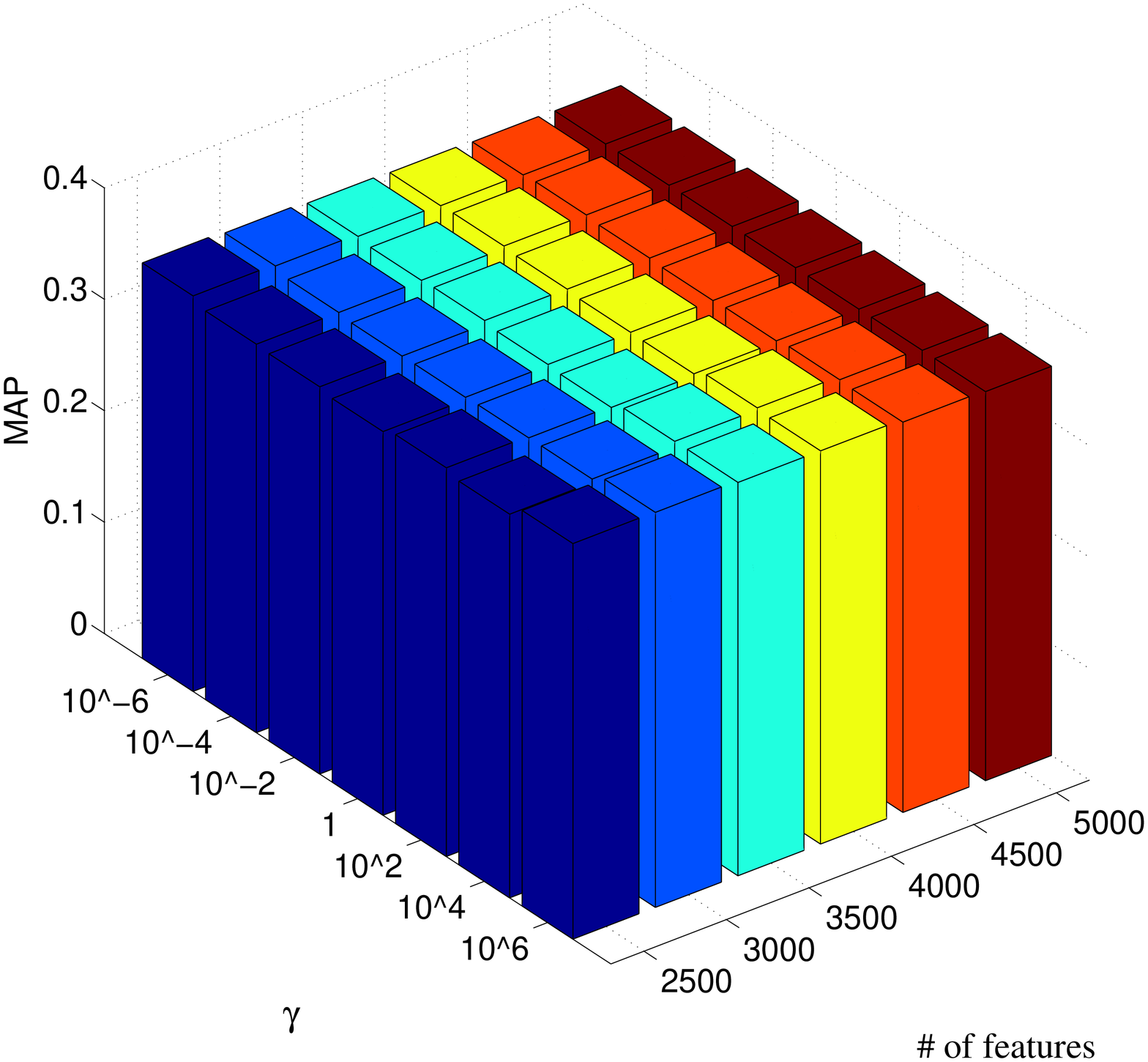}}
\subfigure[]{
\includegraphics[width=0.32\linewidth]{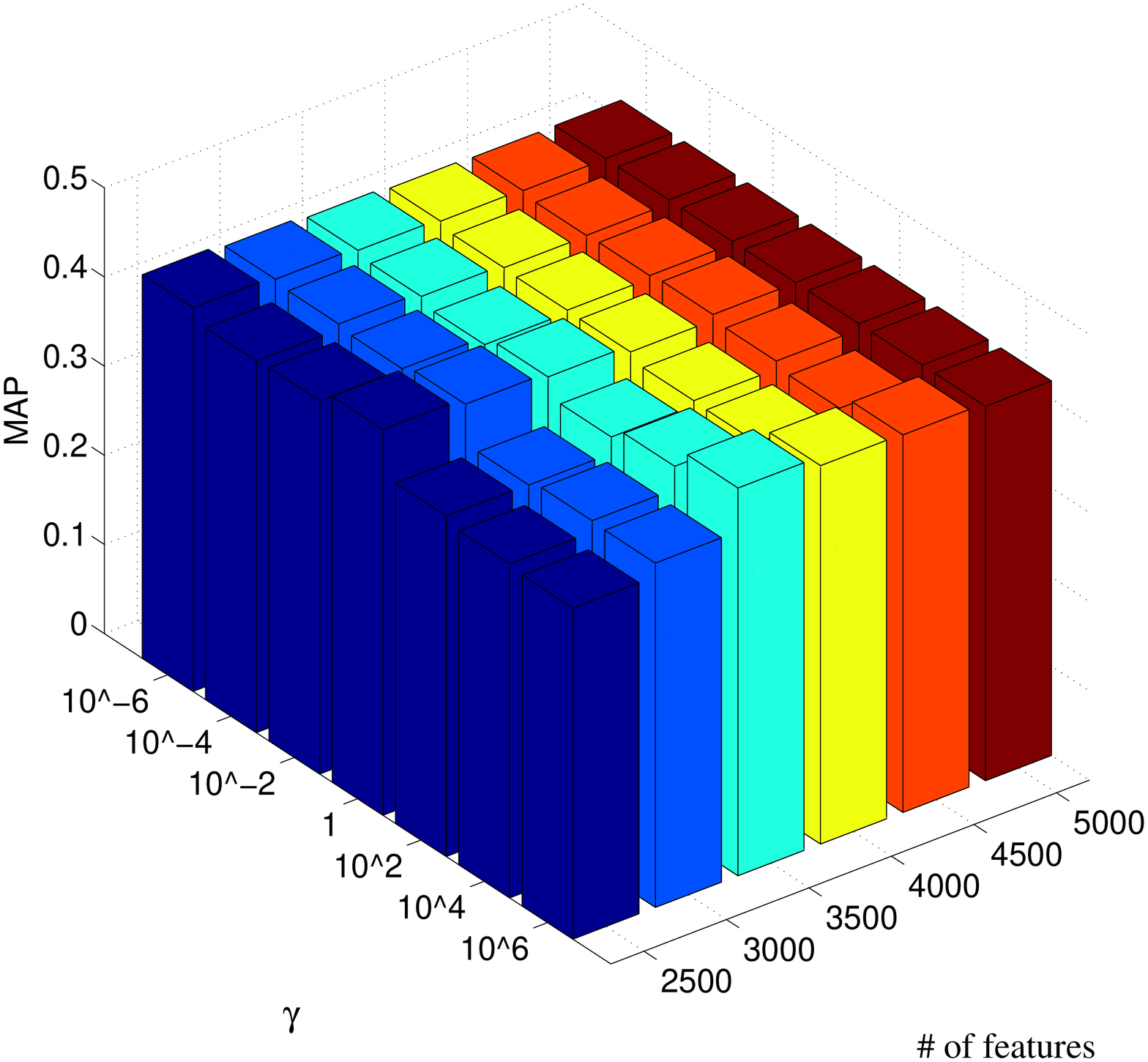}}
\caption{MAP with different $\gamma$ and the number of features while keeping $\alpha$ and $\beta$ fixed on CCV database. (a) SUBJECT 1. (b) SUBJECT 2. (c) SUBJECT 3} 
\label{parsen_gammanum}
\end{figure*}

%

\section{Conclusion}
In this paper, we have proposed a new semi-supervised feature analysis method. This method is able to mine correlations between different features and leverage shared information between multiple related tasks. Since the proposed objective function is non-smooth and difficult to solve, we propose an iterative and effective algorithm. To evaluate performances of the proposed method, we apply it to different applications, including video classification, image annotation, human motion recognition and 3D motion data analysis. The experimental results indicate that the proposed method outperforms the other compared algorithms for different applications.

{
\bibliographystyle{IEEEtran}
\bibliography{tkde}

\begin{thebibliography}{10}
\providecommand{\url}[1]{#1}
\csname url@samestyle\endcsname
\providecommand{\newblock}{\relax}
\providecommand{\bibinfo}[2]{#2}
\providecommand{\BIBentrySTDinterwordspacing}{\spaceskip=0pt\relax}
\providecommand{\BIBentryALTinterwordstretchfactor}{4}
\providecommand{\BIBentryALTinterwordspacing}{\spaceskip=\fontdimen2\font plus
\BIBentryALTinterwordstretchfactor\fontdimen3\font minus
  \fontdimen4\font\relax}
\providecommand{\BIBforeignlanguage}[2]{{%
\expandafter\ifx\csname l@#1\endcsname\relax
\typeout{** WARNING: IEEEtran.bst: No hyphenation pattern has been}%
\typeout{** loaded for the language `#1'. Using the pattern for}%
\typeout{** the default language instead.}%
\else
\language=\csname l@#1\endcsname
\fi
#2}}
\providecommand{\BIBdecl}{\relax}
\BIBdecl

\bibitem{bib_yu}
L.~Yu and H.~Liu, ``Feature selection for high-dimensional data: a fast
  correlation-based filter solution,'' in \emph{Proc. ICML}, 2003, pp.
  856--863.

\bibitem{bib_luis}
L.~Talavera, ``An evaluation of filter and wrapper methods for feature
  selection in categorical clustering,'' in \emph{Proc. IDA}, 2005, pp.
  440--451.

\bibitem{bib_Shuang}
S.~H. Yang and B.-G. Hu, ``Discriminative feature selection by nonparametric
  bayes error minimization,'' \emph{IEEE Trans. Knowl. Data Engin.}, vol.~24,
  no.~8, pp. 1422--1434, 2012.

\bibitem{bib_webimage}
Z.~Ma, F.~Nie, Y.~Yang, J.~R. Uijlings, and N.~Sebe, ``Web image annotation via
  subspace-sparsity collaborated feature selection,'' \emph{IEEE Trans.
  Multimedia}, vol.~14, no.~4, pp. 1021--1030, 2012.

\bibitem{bib_ma}
M.~Zhigang, F.~Nie, Y.~Yang, J.~Uijlings, N.~Sebe, and A.~G. Hauptmann,
  ``Discriminating joint feature analysis for multimedia data understanding,''
  \emph{IEEE Trans. Multimedia}, vol.~14, no.~6, pp. 1662 -- 1672, 2012.

\bibitem{bib_Yang_Yi}
Y.~Yang, Z.~Ma, A.~Hauptmann, and N.~Sebe, ``Feature selection for multimedia
  analysis by sharing information among multiple tasks,'' \emph{IEEE Trans.
  Multimedia}, vol.~15, no.~3, pp. 661 -- 669, 2013.

\bibitem{bib_Huan}
H.~Liu and L.~Yu, ``Toward integrating feature selection algorithms for
  classification and clustering,'' \emph{IEEE Trans. Knowl. Data Engin.},
  vol.~17, no.~4, pp. 491--502, 2005.

\bibitem{DBLP:journals/tsmc/YangO12}
J.~Yang and C.~J. Ong, ``An effective feature selection method via mutual
  information estimation,'' \emph{{IEEE} Transactions on Systems, Man, and
  Cybernetics, Part {B}}, vol.~42, no.~6, pp. 1550--1559, 2012.

\bibitem{bib_Duda}
R.~O. Duda, P.~E. Hart, and D.~G. Stork., \emph{Pattern Classification}.\hskip
  1em plus 0.5em minus 0.4em\relax John Wiley \& Sons, 2012.

\bibitem{bib_Nie}
N.~Feiping, H.~Huang, X.~Cai, and C.~H. Ding, ``Efficient and robust feature
  selection via joint ℓ2, 1-norms minimization,'' in \emph{Proc. NIPS}, 2010,
  pp. 1813--1821.

\bibitem{bib_Rich}
R.~Caruana, ``Multitask learning,'' \emph{Machine Learning}, vol.~28, no.~1,
  pp. 41--75, 1997.

\bibitem{bib_convexMultitask}
A.~Argyriou, T.~Evgeniou, and M.~Pontil, ``Convex multi-task feature
  learning,'' \emph{Machine Learning}, vol.~73, no.~3, pp. 243--272, 2008.

\bibitem{multifs}
A.~Argyriou and T.~Evgeniou, ``Multi-task feature learning,'' in \emph{Proc.
  NIPS}, 2007.

\bibitem{multi_task_sen}
S.~Wang, X.~Chang, X.~Li, Q.~Z. Shen, and W.~Chen, ``Multi-task support vector
  machines for feature selection with shared knowledge discovery,''
  \emph{Signal Processing}, 2015.

\bibitem{bib_ZhengZhao}
Z.~Zhao, L.~Wang, and H.~Liu, ``Efficient spectral feature selection with
  minimum redundancy,'' in \emph{Proc. AAAI}, 2010.

\bibitem{bib_featureSelection}
S.~H. Yang and B.-G. Hu, ``Discriminative feature selection by nonparametric
  bayes error minimization,'' \emph{IEEE Trans. Knowl. Data Engin.}, vol.~24,
  no.~8, pp. 1422--1434, 2012.

\bibitem{DBLP:journals/tnn/XiangNMPZ12}
S.~Xiang, F.~Nie, G.~Meng, C.~Pan, and C.~Zhang, ``Discriminative least squares
  regression for multiclass classification and feature selection,''
  \emph{{IEEE} Trans. Neural Netw. Learning Syst.}, vol.~23, no.~11, pp.
  1738--1754, 2012.

\bibitem{bib_l21norm}
Y.~Yang, H.~Shen, Z.~Ma, Z.~Huang, and X.~Zhou, ``L21-norm regularization
  discriminative feature selection for unsupervised learning,'' in \emph{proc.
  IJCAI}, 2011.

\bibitem{bib_spec}
Z.~Zheng and H.~Liu, ``Spectral feature selection for supervised and
  unsupervised learning,'' in \emph{Proc. ICML}, 2007, pp. 1151--1157.

\bibitem{bib_Xiaojin}
X.~Zhu, ``Semi-supervised learning literature survey,'' Computer Science,
  University of Wisconsin-Madison, Tech. Rep., 2006.

\bibitem{DBLP:journals/tnn/SoaresCY12}
R.~G.~F. Soares, H.~Chen, and X.~Yao, ``Semisupervised classification with
  cluster regularization,'' \emph{{IEEE} Trans. Neural Netw. Learning Syst.},
  vol.~23, no.~11, pp. 1779--1792, 2012.

\bibitem{DBLP:journals/tsmc/Wang11a}
F.~Wang, ``Semisupervised metric learning by maximizing constraint margin,''
  \emph{{IEEE} Transactions on Systems, Man, and Cybernetics, Part {B}},
  vol.~41, no.~4, pp. 931--939, 2011.

\bibitem{bib_Ira}
I.~Cohen, F.~G. Cozman, N.~Sebe, M.~C. Cirelo, and T.~S. Huang,
  ``Semisupervised learning of classifiers: Theory, algorithms, and their
  application to human-computer interaction,'' \emph{IEEE Trans. Pattern Anal.
  Mach. Intell.}, vol.~26, no.~12, pp. 1553--1567, 2004.

\bibitem{cvprsemi}
M.~Guillaumin, J.~Verbeek, and C.~Schmid, ``Multimodal semi-supervised learning
  for image classification,'' in \emph{Proc. CVPR}, 2010, pp. 902--909.

\bibitem{DBLP:conf/aaai/ChangNYH14}
X.~Chang, F.~Nie, Y.~Yang, and H.~Huang, ``A convex formulation for
  semi-supervised multi-label feature selection,'' in \emph{Proceedings of the
  Twenty-Eighth {AAAI} Conference on Artificial Intelligence, July 27 -31,
  2014, Qu{\'{e}}bec City, Qu{\'{e}}bec, Canada.}, 2014, pp. 1171--1177.

\bibitem{bib_yiyang_rank}
Y.~Yang, D.~Xu, F.~Nie, J.~Luo, and Y.~Zhuang, ``Ranking with local regression
  and global alignment for cross media retrieval,'' in \emph{Proc. ACM
  Multimedia}, 2009, pp. 175--184.

\bibitem{semiYun}
Y.~Liu, F.~Nie, J.~Wu, and L.~Chen, ``Efﬁcient semi-supervised feature
  selection with noise insensitive trace ratio criterion,''
  \emph{Neurocomputing}, 2012.

\bibitem{DBLP:conf/pakdd/ChangSWLL14}
X.~Chang, H.~Shen, S.~Wang, J.~Liu, and X.~Li, ``Semi-supervised feature
  analysis for multimedia annotation by mining label correlation,'' in
  \emph{Advances in Knowledge Discovery and Data Mining - 18th Pacific-Asia
  Conference, {PAKDD} 2014, Tainan, Taiwan, May 13-16, 2014. Proceedings, Part
  {II}}, 2014, pp. 74--85.

\bibitem{ICMLsemi}
T.~Jebara, J.~Wang, and S.-F. Chang, ``Graph construction and b-matching for
  semi-supervised learning,'' in \emph{Proc. ICML}, 2009, pp. 441--448.

\bibitem{active_diversity}
Y.~Yang, Z.~Ma, F.~Nie, X.~Chang, and A.~G. Hauptmann, ``Multi-class active
  learning by uncertainty sampling with diversity maximization,''
  \emph{International Journal of Computer Vision}, pp. 1--15, 2014.

\bibitem{bib_manifold}
F.~Nie, D.~Xu, I.-H. Tsang, and C.~Zhang, ``Flexible manifold embedding: A
  framework for semi-supervised and unsupervised dimension reduction,''
  \emph{IEEE Trans. Image Process.}, vol.~19, no.~7, pp. 1921--1932, 2010.

\bibitem{bib_Yang}
Y.~Yang, D.~Xu, F.~Nie, S.~Yan, and Y.~Zhuang, ``Image clustering using local
  discriminant models and global integration,'' \emph{IEEE Trans. Image
  Process.}, vol.~19, no.~10, pp. 2761--2773, 2010.

\bibitem{SIGKDD}
J.~Chen, J.~Zhou, and J.~Ye, ``Integrating low-rank and group-sparse structures
  for robust multi-task learning,'' in \emph{Proc. ACM SIGKDD}, 2011, pp.
  42--50.

\bibitem{bib_Guillaume}
G.~Obozinski, B.~Taskar, and M.~I. Jordan, ``Joint covariate selection and
  joint subspace selection for multiple classification problems,''
  \emph{Statistics and Computing}, vol.~20, no.~2, pp. 231--252, 2010.

\bibitem{bib_YiYangManifold}
Y.~Yang, Y.-T. Zhuang, F.~Wu, and Y.-H. Pan, ``Harmonizing hierarchical
  manifolds for multimedia document semantics understanding and cross-media
  retrieval,'' \emph{IEEE Trans. Multimedia}, vol.~10, no.~3, pp. 437--446,
  2008.

\bibitem{seman}
Y.-Y. Lin, T.-L. Liu, and H.-T. Chen, ``Semantic manifold learning for image
  retrieval,'' in \emph{Proc. ACM Multimedia}, 2005.

\bibitem{bib_ccv}
Y.-G. Jiang, G.~Ye, S.-F. Chang, D.~Ellis, and A.~C. Loui, ``Consumer video
  understanding: A benchmark database and an evaluation of human and machine
  performance,'' in \emph{Proc. ICMR}, 2011, p.~29.

\bibitem{nus-wide-civr09}
T.-S. Chua, J.~Tang, R.~Hong, H.~Li, Z.~Luo, and Y.-T. Zheng, ``Nus-wide: A
  real-world web image database from national university of singapore,'' in
  \emph{Proc. of ACM Conf. on Image and Video Retrieval (CIVR'09)}, Santorini,
  Greece., July 8-10, 2009.

\bibitem{bib_HMDB}
H.~Kuehne, H.~Jhuang, E.~Garrote, T.~Poggio, and T.~Serre, ``Hmdb: a large
  video database for human motion recognition,'' in \emph{Proc. ICCV}, 2011,
  pp. 2556--2563.

\bibitem{bib_humaneva}
S.~Leonid and M.~J. Black, ``Humaneva: Synchronized video and motion capture
  dataset for evaluation of articulated human motion,'' Brown Univertsity,
  Tech. Rep. CS-06-08, 2006.

\bibitem{bib_lsdf}
J.~Zhao, K.~Lu, and X.~He, ``Locality sensitive semi-supervised feature
  selection,'' \emph{Neurocomputing}, vol.~71, no.~10, pp. 1842--1849, 2008.

\bibitem{bib_hengwang}
H.~Wang and C.~Schmid, ``Action recognition with improved trajectories,'' in
  \emph{Proc. ICCV}, 2013.

\bibitem{bib_Ning}
N.~Huazhong, W.~Xu, Y.~Gong, and T.~Huang, ``Discriminative learning of visual
  words for 3d human pose estimation,'' in \emph{Proc. CVPR}, 2008, pp. 1--8.

\bibitem{bib_Yang_Yi2}
Y.~Yang, D.~Xu, F.~Nie, S.~Yan, and Y.~Zhuang, ``Image clustering using local
  discriminant models and global integration,'' \emph{IEEE Trans. Image
  Process.}, vol.~19, no.~10, pp. 2761--2773, 2010.

\end{thebibliography}
}

\end{document}